\documentclass{article} 
\usepackage{iclr2027_conference,times}

\usepackage{amsmath,amsfonts,bm}

\def\1{\bm{1}}

\DeclareMathAlphabet{\mathsfit}{\encodingdefault}{\sfdefault}{m}{sl}
\SetMathAlphabet{\mathsfit}{bold}{\encodingdefault}{\sfdefault}{bx}{n}

\usepackage{hyperref}
\usepackage{url}
\usepackage{graphicx}
\usepackage{colortbl}
\usepackage[most]{tcolorbox}
\usepackage{fvextra}
\usepackage{subcaption}
\usepackage{enumitem}
\usepackage{amssymb}
\usepackage{mathtools}
\usepackage{amsthm}
\usepackage{amsmath}
\usepackage{bbm}

\theoremstyle{plain}
\newtheorem{theorem}{Theorem}[section]
\newtheorem*{theorem*}{Theorem}

\newtheorem{lemma}[theorem]{Lemma}
\newtheorem*{lemma*}{Lemma}

\theoremstyle{definition}
\newtheorem{definition}[theorem]{Definition}

\theoremstyle{remark}

\newcommand{\G}{\mathcal{G}}
\newcommand{\bs}[1]{\boldsymbol{#1}}

\newcommand{\algname}{\textnormal{\textsc{HarA}}}

\title{Hierarchical Credit Assignment for RLVR on Fused Gromov-Wasserstein Geometry}

\author{%
  Qi Yu$^1$, Ruizhong Qiu$^1$, Zhichen Zeng$^1$, Xuying Ning$^1$, Yanjun Zhao$^1$ \\
  \textbf{Dongqi Fu$^2$, Yinglong Xia$^2$, Hong Li$^2$, Hanghang Tong$^1$}\\
  $^1$University of Illinois Urbana-Champaign, $^2$Meta
}

\iclrpreprint 
\begin{document}

\maketitle


\begin{abstract}
    Reinforcement learning with verifiable rewards (RLVR) has been shown to improve the reasoning capability of large language models (LLMs) across diverse reasoning tasks.
    However, group-based RLVR methods, such as GRPO, assign a uniform advantage to all tokens within rollouts of the same outcome.
    While existing works refine credit assignment of GRPO based on local signals such as token positions or entropy, they often fail to capture the {\em global semantic novelty} of a reasoning behavior relative to the current policy.
    In this work, we propose a \underline{h}ierarchic\underline{a}l c\underline{r}edit \underline{a}ssignment approach for group-based RLVR methods, called \algname,
    which identifies and exploits semantically novel reasoning behaviors during RLVR.
    \algname\ represents each sampled rollout as a distribution over the hidden states and positions of tokens,
    and computes the Fused Gromov-Wasserstein (FGW) barycenters of all rollouts with the same outcome, capturing the internal reasoning patterns in the latent space under the current policy.
    The semantic novelty of a reasoning element can then be measured by its contribution to the FGW distance between the current rollout and the barycenter.
    While solving the FGW formulation is expensive, we introduce an anchor-guided linearization that turns it into a Wasserstein formulation solvable via the Sinkhorn algorithm efficiently.
    By reweighing token-level advantage of group-based RLVR methods based on the novelty signals, \algname\ exploits novel reasoning behaviors at flexible granularities to encourage fine-grained LLMs exploration.
    Extensive experiments across three group-based RLVR methods show that our plug-and-play method effectively enhances the exploration of LLMs, outperforming existing methods across diverse reasoning benchmarks.
\end{abstract}

\section{Introduction}\label{sec:intro}

Recent advances in large language models (LLMs) have revealed remarkable potentials of reinforcement learning with verifiable rewards (RLVR)~\citep{schulman2017proximal, shao2024deepseekmath} in empowering LLMs with complex reasoning capabilities.
Group-based RLVR methods~\citep{shao2024deepseekmath, liu2025understanding, yu2026dapo}, such as Group Relative Policy Optimization (GRPO)~\citep{shao2024deepseekmath}, samples multiple rollouts to estimate their relative advantages by grouping them based on binary rewards from a verifier.
Such approaches eliminate the need for training an explicit critic model, and has become a major paradigm for RLVR.
~\citep{zhang2025survey, zhang2025landscape, wei2026agentic}.

Despite the effectiveness of group-based RLVR methods, they assign uniform advantage to different sampled rollouts and their consisting tokens due to the sparsity of verifiable rewards,
providing coarse-grained credit assignment that limits the exploration of LLMs during post-training.
While there exists works which aim to better characterize rollout and/or token-level credits to encourage better exploration,
they bear the following key limitations:
\textbf{1) Locality of the signals.} Existing credit assignment methods for GRPO mostly rely on local statistics within individual rollouts,
such as token position~\citep{parthasarathi2025grpo} or entropy~\citep{cheng2026reasoning, xie2026unlocking}, for advantage reshaping.
While advantage estimation can propagate such signals across tokens, it remains local within each rollout thus cannot capture the novelty of a reasoning token globally relative to the current policy.
\textbf{2) Surface-level modeling.} The position and entropy-based uncertainty of a token largely encode the surface-level information of a token,
failing to capture its true semantic novelty.
For example, as shown in Figure~\ref{fig:teaser}, position-based methods emphasize prior tokens which hardly impact future reasoning states,
while entropy-based methods assign higher credits to tokens with flexible wording choices (e.g.,``Thus") rather than tokens that leads to meaningful exploration.
\textbf{3) Lack of multi-scale attribution.} Novel reasoning naturally occurs at multiple scales, from individual tokens, steps, to complete reasoning trajectories.
However, existing credit assignment methods typically operates at a single granularity, assigning rollout~\citep{liu2026leveraging}, span~\citep{chen2026hidden},
or token-level credits~\citep{parthasarathi2025grpo, cheng2026reasoning} without a principled mechanism for attribution across multiple granularities.

\begin{figure*}[t]
  \centering
  \includegraphics[width=\linewidth, trim=-10 5 -10 2, clip]{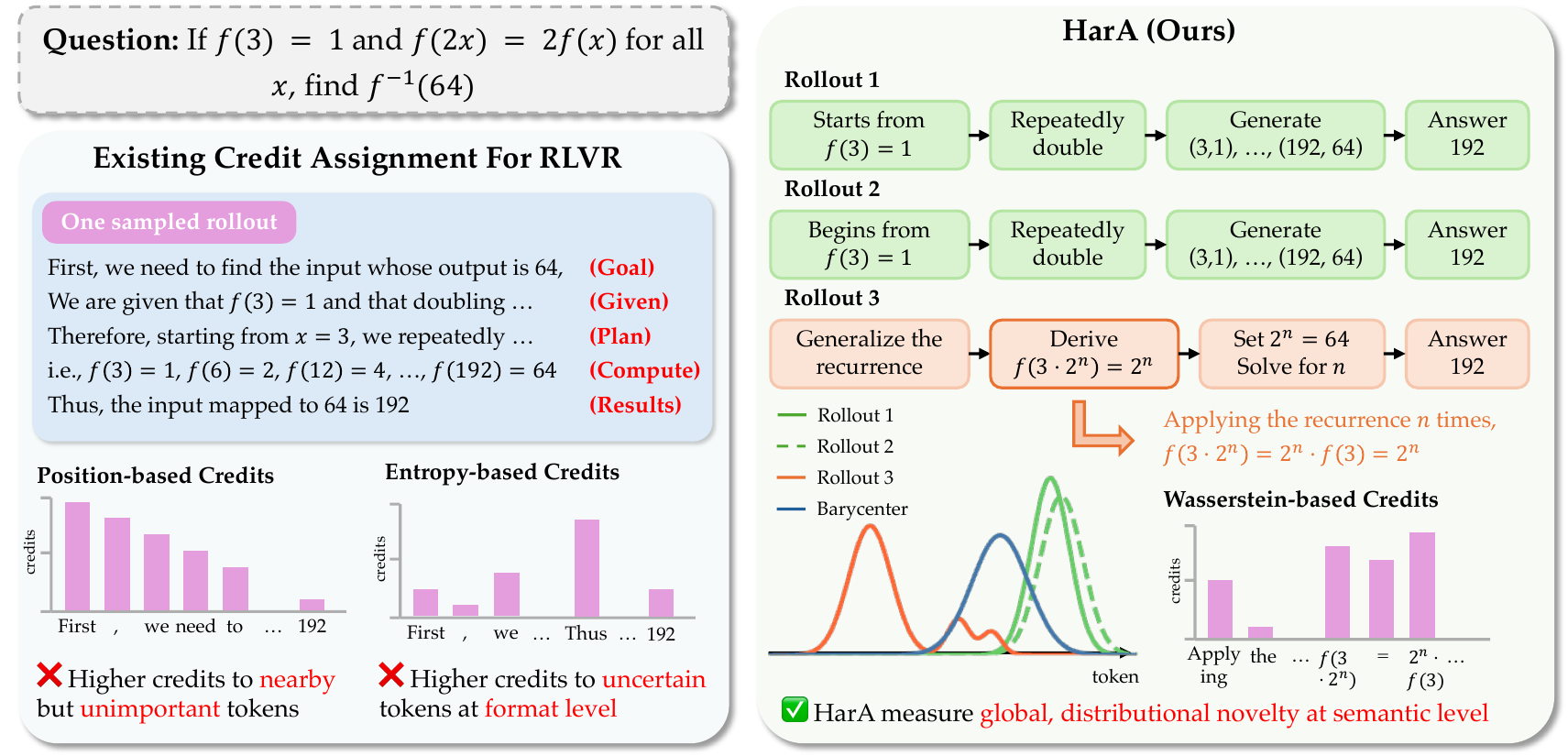}
  \vspace{-15pt}
  \caption{An example comparing existing credit assignment methods and the proposed \algname.}
  \vspace{-30pt}
  \label{fig:teaser}
\end{figure*}

To tackle the above limitations of existing credit assignment methods, we propose a hierarchical credit assignment approach for group-based RLVR called \algname,
leveraging Optimal Transport (OT) as a principled tool for identifying and encouraging reasoning novelty at arbitrary granularity.
\algname\ goes beyond local signals by representing each rollout as a distribution over tokens and comparing it to a Fused Gromov-Wasserstein (FGW) barycenter of all rollouts of the same outcome,
capturing the global novelty of a reasoning element relative to the current policy (\textbf{Limitation \#1}).
Moreover, by encoding the transportation cost via both the hidden states and the structural position of a token,
\algname\ summarizes shared reasoning patterns in the latent space via the FGW barycenter, thus capturing reasoning novelty at both semantical~\citep{hao2024training} and structural levels (\textbf{Limitation \#2}).
While the original FGW formualtion is computationally expensive, we leverage an anchor-guided linearization to turns it into a Wasserstein formulation that can be solved efficiently via the Sinkhorn algorithm~\citep{gabriel2019computational}.
Finally, the FGW formulation naturally provide fine-grained soft alignment of tokens to abstract reasoning steps represented by barycenter supports,
making it easy to measure the novelty of a reasoning element at arbitrary granularity based on its FGW distance to the barycenter (\textbf{Limitation \#3}).
By reweighing token-level advantages based on the novelty measure, \algname\ serves as a direct plug-and-play framework that is widely applicable to diverse group-based RLVR methods~\citep{shao2024deepseekmath, liu2025understanding, yu2026dapo}.

Comprehensive experiments across 3 model sizes, 5 reasoning benchmarks, and 3 group-based RLVR methods demonstrates that \algname\ avoids premature entropy collapse and leads to more extensive exploration,
outperforming existing credit assignment approaches while maintaining a good balance between effectiveness and efficiency.
Moreover, our observation in Figure~\ref{fig:obs} shows that the FGW barycenter encodes internal reasoning patterns of the LLM, making them principle references for identifying novel reasoning behaviors that contribute to explorations of LLMs during RLVR.
The main contributions of this paper are summarized as follows:
\begin{itemize}[nosep, wide=0pt, leftmargin=*, after=\strut, label=\textbullet]
  \item \textbf{Barycenter View of LLMs.} To our best knowledge, we are the first to represent internal reasoning patterns of LLMs via the FGW barycenter of its sampled rollouts in the latent space.
  \item \textbf{Multi-scale Credit Assignment.} We propose a hierarchical credit assignment approach \algname\ for group-based RLVR methods that idenitfy novel reasoning behaviors at arbitrary granularity.
  \item \textbf{Extensive Analysis.} We provide extensive analysis on the approximation error of the anchor-guided linearization and the complexity of \algname.
  \item \textbf{Strong Performance.} Extensive experiments across diverse model sizes, reasoning benchmarks, and group-based RLVR methods demonstrate that \algname\ consistently outperforms exitsing credit assignment approaches by a large margin.
\end{itemize}

\section{Preliminaries}\label{sec:pre}


\vspace{-3pt}
\subsection{Reinforcement Learning with Verifiable Rewards (RLVR)}\label{subsec:rlvr}
\vspace{-3pt}
RLVR has played a critical role in LLMs post-training to empower them with complex reasoning capabilities~\citep{schulman2017proximal, shao2024deepseekmath}.
Formally, let $\pi_{\theta}$ denotes the policy of an LLM parameterized by $\theta$.
Given a prompt $x$ and its ground-truth answer $a$, the LLM generates a response sequence $y=(y_1, y_2,..., y_l)$ of $l$ tokens autoregressively under its policy,
i.e., $\pi_{\theta}(y|x)=\prod_{i=1}^l\pi_\theta(y_i|x,y_{<i})$, and is evaluated based on a reward function $R(y,a)=\mathbbm{1}\{y\equiv a\}$ where $\mathbbm{1}\{\cdot\}$ denotes a binary indictor function of whether the outcome of response $y$ is equivalent to $a$.

\vspace{-3pt}
\subsection{Group-based Policy Optimization}\label{subsec:gbpo}
\vspace{-3pt}
Group-based Policy Optimization, a dominant paradigm in RLVR, estimates the relative advantages of multiple sampled rollouts by grouping them based on verifiable rewards without an explicit critic model~\citep{shao2024deepseekmath, liu2025understanding, yu2026dapo}.
One representative method is Group Relative Policy Optimization (GRPO)~\citep{shao2024deepseekmath}, whose objective is defined as
\[
    \mathbb{E}_{x,\{y_{k}\}_k^K\sim\pi_\theta(\cdot|x)}\!\!\left[\!\frac{1}{K}\sum_{k=1}^K\frac{1}{|y_k|}\sum_{i=1}^{|y_k|}\min\left(r_{k,i}A_{k}, \text{clip}\left(r_{k,i}, 1-\epsilon, 1+\epsilon\right)A_{k}\right)\!\right]
    -\beta\cdot\text{KL}\left(\pi_\theta(\cdot|x)\|\pi_{\text{ref}}(\cdot|x)\right)
\]
where $\{y_k\}_k^K\sim\pi_\theta(\cdot|x)$ denotes $K$ sampled rollouts,
$r_{k,i}=\frac{\pi_{\theta}(y_{k,i}|x,y_{k,<i})}{\pi_{\theta_{\text{old}}}(y_{k,i}|x,y_{k,<i})}$ denotes the importance ratio where $\pi_{\theta_{\text{old}}}$ is the behavior policy from the last policy gradient update,
and $A_k=\frac{R(y_{k}, a)-\text{mean}\left(\{R(y_k, a)\}^K_{k=1}\right)}{\text{std}\left(\{R(y_k, a)\}^K_{k=1}\right)}$ denotes the relative advantage of the $k$-th sampled rollout.
$\text{KL}(\cdot\|\cdot)$ denotes the KL divergence, $\pi_{\text{ref}}$ denotes the reference policy, and $\beta$ denotes the regularization weight.

\vspace{-3pt}
\subsection{Fused Gromov-Wasserstein Barycenter}\label{subsec:fgw}
\vspace{-3pt}
The FGW distance serves as a powerful tool for measuring the discrepancy between geometric data, such as graphs~\citep{titouan2019optimal, zeng2025pave},
by taking into account both node feature and graph structural information. In this work, we represent each LLM rollout as an attributed graph to jointly capture their semantic and structural information.
Formally, an attributed graph $\G=\{\mathbf{A}, \mathbf{X}\}$, where $\mathbf{A}$ denotes the graph adjacency matrix and $\mathbf{X}$ denotes the node feature matrix,
can be represented by a discrete distribution on the product space of node feature and structure, i.e., $\bs{\mu}=\sum_{i=1}^l\mathbf{w}(i)\delta_{\mathbf{A}(v_i), \mathbf{X}(v_i)}$,
where $\mathbf{w}\in\Delta_l$ is a histogram representing the node weight of node $v_i\in\G$, and $\delta$ is the Dirac function.
In this way, the FGW distance, which combines the Wasserstein (W) and the Gromov-Wasserstein (GW) distance linearly, can be formally defined as follows
\begin{definition}\label{def:fgw_dist}
    \textit{Fused Gromov-Wasserstein (FGW) distance}\\
    \textbf{Given:}
    (1) two attributed graphs $\G_1,\G_2$ with distributions $\bs{\mu}_1,\bs{\mu}_2$,
    (2) a structural distance measure $f\!:\!\G_i\!\times\!\G_i\!\to\!\mathbb{R}_{\geq 0}$ which captures pairwise node distances within the same graph based on graph structure $\mathbf{A}_i$ for $i\in\{1,2\}$, e.g., shortest-path distance,
    (3) a feature distance measure $g\!:\!\G_1\!\times\!\G_2\!\to\!\mathbb{R}_{\geq 0}$ which captures pairwise node distances across two graphs based on node features $\mathbf{X}_1$ and $\mathbf{X}_2$, e.g., $L_2$ distance,
    (4) the order of FGW distance $q$,
    (5) hyperparameter $0<\alpha<1$\\
    \textbf{Output:} the FGW distance $\mathrm{FGW}_{q,\alpha}(\G_1,\G_2,\bs{\mu}_1,\bs{\mu}_2,f,g)$ between $\G_1$ and $\G_2$
    \begin{equation}\label{eq:fgw_dist}
        \mathrm{FGW}_{q,\alpha}(\G_1,\G_2,\bs{\mu}_1,\bs{\mu}_2,f,g)=\min_{\mathbf{T}\in\Pi(\bs{\mu}_1,\bs{\mu}_2)}
        (1-\alpha)\mathrm{W}_q(\mathbf{T},g)
        +\alpha\mathrm{GW}_q(\mathbf{T},f)
    \end{equation}
    \vspace{-5pt}
    \[
        \text{s.t.}~
        \mathrm{W}_q(\mathbf{T},g)=\mathbb{E}_{(u,v)\sim\mathbf{T}}\left[g^q(u,v)\right],\quad
        \mathrm{GW}_q(\mathbf{T},f)=\mathbb{E}_{(u,v),(u',v')\sim\mathbf{T}\otimes\mathbf{T}}\left[\left(f(u,u')-f(v,v')\right)^q\right]
    \]
\end{definition}
where $u,u'\in\G_1$ and $v,v'\in\G_2$ are nodes in the two graphs. We adopt $q=2$ throughout the paper.
Given multiple graphs, the FGW barycenter serves as a consensus graph by summarizing their shared patterns in node features and graph structures based on FGW distance. Formally,
\begin{definition}
    \textit{Fused Gromov-Wasserstein (FGW) barycenter}~\citep{zeng2024hierarchical}\\
    \textbf{Given:}
    (1) a set of attributed graphs $\left\{\G_k=\{\mathbf{A}_k,\mathbf{X}_k\}\right\}_{k=1}^K$ with distributions $\left\{\bs{\mu}_k\right\}_{k=1}^K$,
    (2) an intra-graph structural measure $f$,
    (3) an cross-graph feature measure $g$,
    (4) a barycenter distribution $\bs{\mu}_b$ with a fixed weight vector $\mathbf{w}_b\in\Delta_{l_b}$,
    (5) the order of FGW distance $q$,
    (6) hyperparameter $\alpha$\\
    \textbf{Output:} the FGW barycenter $\G_b=\{\mathbf{A}_b,\mathbf{X}_b\}$
    \vspace{-5pt}
    \begin{equation}\label{eq:fgw_barycen}
        \{\mathbf{A}_b,\mathbf{X}_b\}:=
        \mathop{\arg\min}\limits_{\mathbf{A},\mathbf{X}}\sum_{k=1}^K\mathrm{FGW}_{q,\alpha}\left(\{\mathbf{A}_k, \mathbf{X}_k\},\{\mathbf{A}, \mathbf{X}\}, \bs{\mu}_k, \bs{\mu}_b, f, g\right)
    \end{equation}
\end{definition}
\vspace{-15pt}

\section{Methodology}\label{sec:method}
\vspace{-9pt}
In this section, we present the proposed credit assignment approach \algname\ for group-based RLVR methods.
Firstly, we introduce a principled way of representing the internal reasoning pattern of LLM within an outcome-grounded group based on the FGW barycenters, in Section~\ref{subsec:me-bary}.
Then, we introduce a hierarchical credit assignment approach based on the FGW distance to the barycenter, in Section~\ref{subsec:me-ca}.
Finally, we provide theoretical analysis regarding the approximation error and the complexity of \algname, in Section~\ref{subsec:me-ana}.

\vspace{-6pt}
\subsection{Distributional Modeling of Internal Reasoning Pattern}\label{subsec:me-bary}
\vspace{-6pt}

Recent studies~\citep{hao2024training, du2026latent} show that the hidden states of an LLM encode abstract reasoning steps in a continuous latent space and can signal when reasoning behavior diverges.
Such observations imply that novel reasoning behaviors may be discovered by comparing the hidden states of sampled rollouts in the latent space.
In this work, we propose to leverage OT as a principled tool for such comparisons.
Specifically, we leverage the FGW barycenters of sampled rollouts within an outcome-grounded group to encode the internal reasoning pattern of the current policy,
which later serves as a principled reference for measuring the novelty of a reasoning element.

\vspace{-6pt}
\subsubsection{FGW Barycenter as Internal Reasoning Pattern}
\vspace{-6pt}

LLM rollouts are in the form of natural language, making principled comparison across different rollouts challenging and costly.
To address this, we project each rollout into a distribution on the product space of the hidden states and position of its tokens,
jointly capturing the semantic and structural information of a rollout in the latent space.
Formally, let $y$ denotes a rollout of $l$ tokens, we represent $y$ as an attributed chain graph $\G=\{\mathbf{A}, \mathbf{X}\}$ with $l$ nodes,
where $\mathbf{A}[i,j]=\mathbbm{1}\{|i-j|=1\}$ represents its chain structure,
and the node feature $\mathbf{X}=\mathbf{H}\in\mathbb{R}^{l\times d}$ is the hidden state matrix of $y$.

After representing rollouts as distributions,
we leverage FGW barycenters to summarize the internal reasoning patterns of the current policy within an outcome-grounded group. Formally, for a prompt $x$ with ground-truth answer $a$,
let $\{\G_k^{(c)}=\{\mathbf{A}_k^{(c)},\mathbf{X}_k^{(c)}\}\}_{k=1}^K$ denotes the graph representations of $K$ sampled rollouts $\{y_k^{(c)}|R(y_k^{(c)},a)=\mathbbm{1}\{c=+\}\}_{k=1}^K$ within group $c\in\{+,-\}$,
i.e., sets of correct and incorrect rollouts,
we compute a FGW barycenter $\G_b^{(c)}=\{\mathbf{A}_b^{(c)},\mathbf{X}_b^{(c)}\}$ for group $c$ as follows
\vspace{-5pt}
\begin{equation}\label{eq:fgw_state}
    \mathbf{X}_b^{(c)}:=\mathop{\arg\min}\limits_{\mathbf{X}}\sum_{k=1}^K\mathrm{FGW}_{2,\alpha}\left(\{\mathbf{A}_k^{(c)},\mathbf{X}_k^{(c)}\},\{\mathbf{A}_b^{(c)},\mathbf{X}\},f,g\right)
\end{equation}

\vspace{-12pt}
where $f(y_{k,i}^{(c)}, y_{k,j}^{(c)})=|i-j|/(l_{k}^{(c)}-1)$ defines a normalized shortest-path distance of tokens within each rollout,
and $g(y_{k,i}^{(c)}, y_{b,j}^{(c)})=\|\mathbf{X}_{k}^{(c)}[i,:]-\mathbf{X}_b^{(c)}[j,:]\|_2$ defines a $L_2$ distance between hidden states of tokens in sampled rollouts and the barycenter.
Note that compared to Eq.~\eqref{eq:fgw_barycen}, we fix $\mathbf{A}_b^{(c)}[i,j]=\mathbbm{1}\{|i-j|=1\}$ to a chain structure to represent a \textit{mean rollout} in the latent space,
and only optimize the hidden states matrix $\mathbf{X}_b^{(c)}$ of the barycenter in Eq.~\eqref{eq:fgw_state}.

However, the FGW barycenter is inherently a discrete distribution that lies in a latent space,
making it less interpretable than an actual rollout consisting of natural language tokens.
To gain a deeper insight into what the FGW barycenter represents,
we conduct the following interpretability studies to examine whether its individual supports correspond to meaningful reasoning stages in natural language.
Specifically, we obtain an OT transport plan $\mathbf{T}^{(c)}_{k}$ for $y_k^{(c)}$ when solving Eq.~\eqref{eq:fgw_state},
which serves as a soft mapping between tokens in $y_k^{(c)}$ and the barycenter.
To further establish a connection between barycenter supports and meaningful reasoning stages, we prompt a powerful LLM, i.e., GPT-5-Sol,
to derive a set of shared reasoning stages $\{S^{(c)}_m\}_{m=1}^M$ from all sampled rollouts in group $c$ in natural languages,
and classify each token to one reasoning stages, e.g., $y_{k,i}^{(c)}\in S^{(c)}_m$. In this way, we obtain a mapping $\mathbf{P}^{(c)}$ between barycenter supports and LLM-derived reasoning stages:
\vspace{-2pt}
\begin{equation}\label{eq:map}
    \mathbf{P}^{(c)}(j,m)
    =
    \frac{
        \sum_{k=1}^{K}\sum_{i=1}^{|y_k^{(c)}|}
        \mathbf{T}_{k}^{(c)}[i,j]
        \mathbbm{1}\!\left\{
        y_{k,i}^{(c)} \in S_m^{(c)}
        \right\}
    }{
        \sum_{k=1}^{K}\sum_{i=1}^{|y_k^{(c)}|}
        \mathbf{T}_{k}^{(c)}[i,j]
    }
\end{equation}
\vspace{-8pt}

\begin{figure*}[t]
    \centering
    \begin{subfigure}{.32\textwidth}
        \centering
        \includegraphics[width=\textwidth, trim=0 0 0 5, clip]{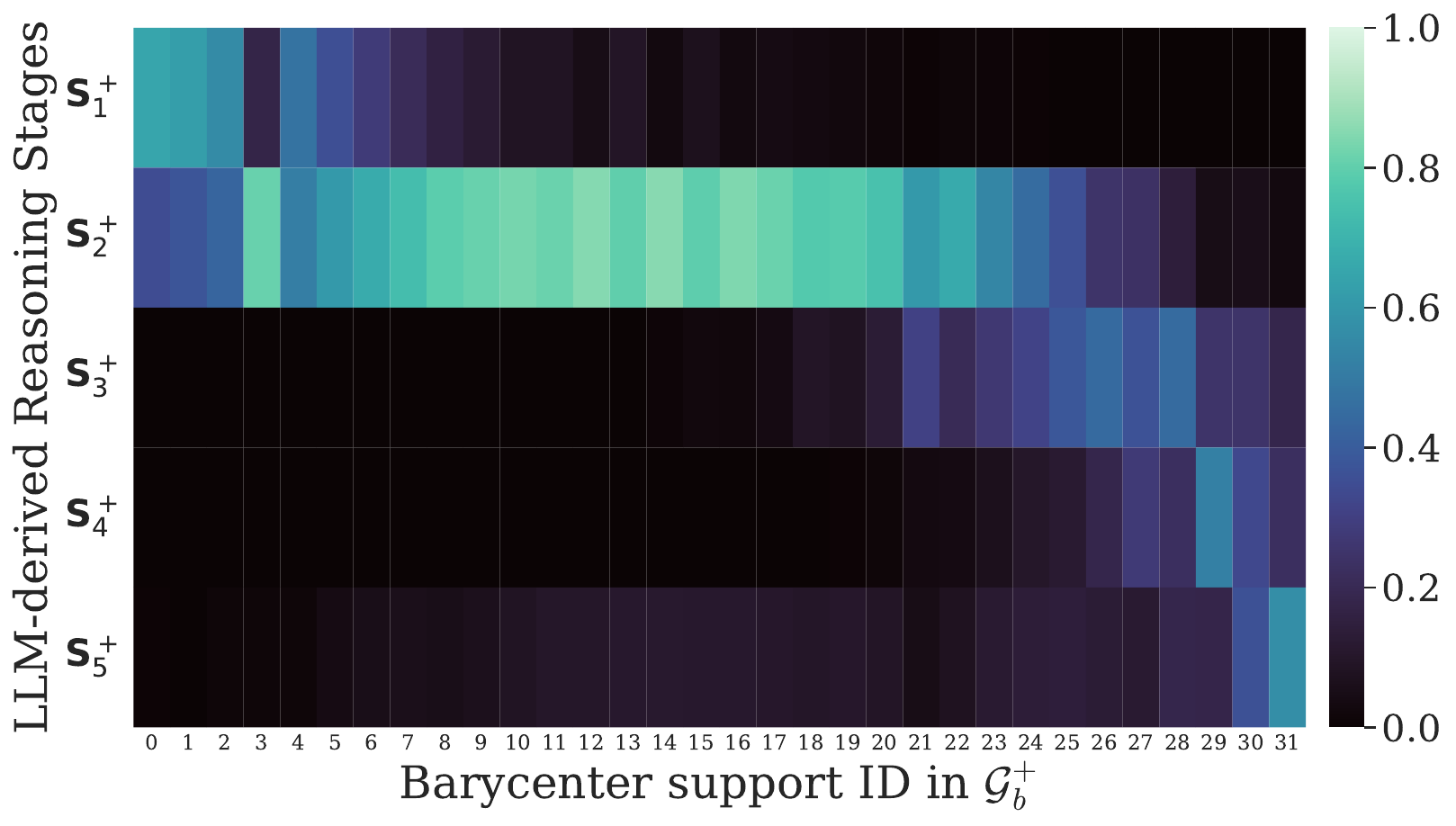}
        \vspace{-15pt}
        \caption{Visualization of mapping $\mathbf{P}^{(+)}$ }
        \label{fig:vis-p-map}
    \end{subfigure}
    \hfill
    \begin{subfigure}{.32\textwidth}
        \centering
        \includegraphics[width=\textwidth, trim=0 5 0 5, clip]{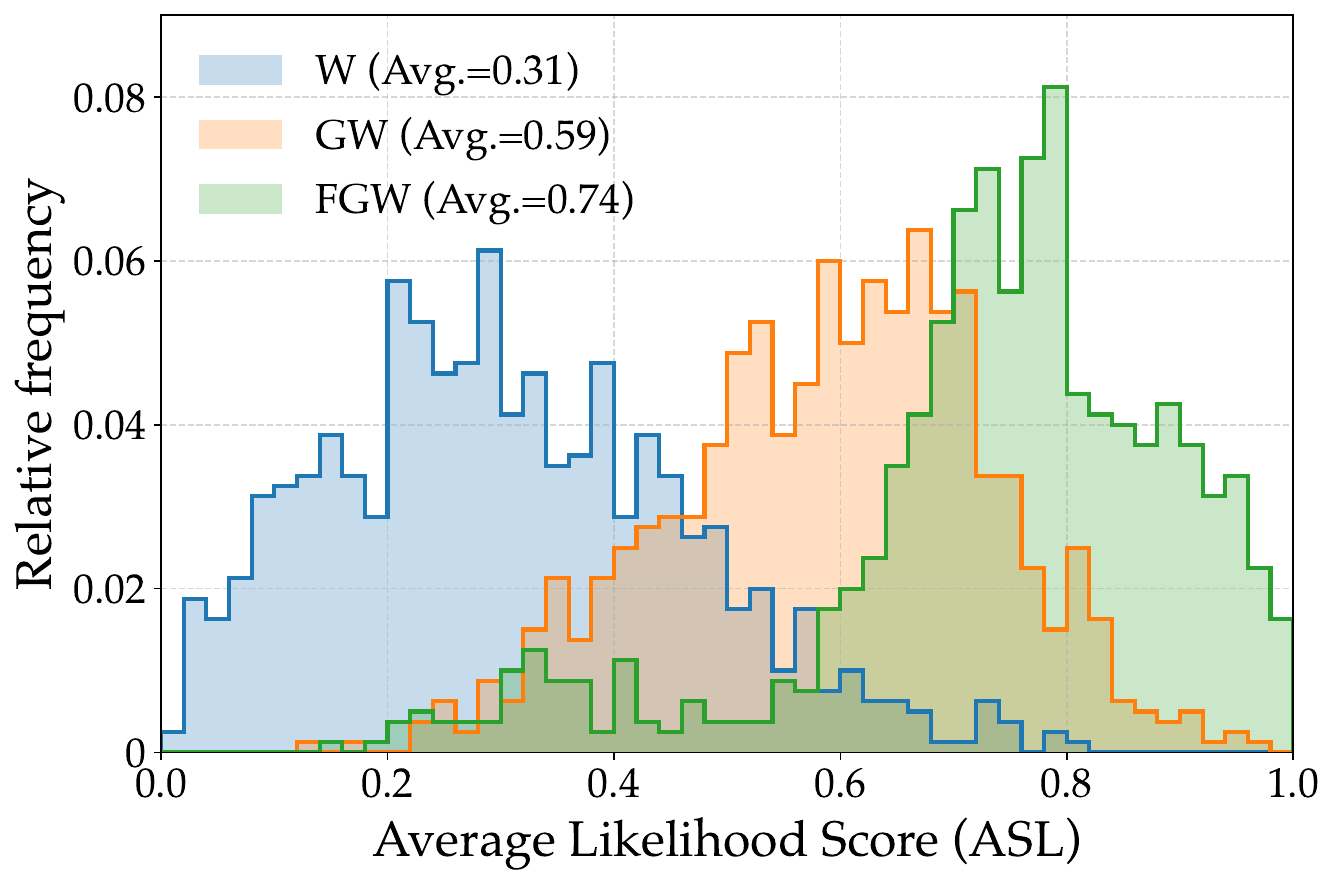}
        \vspace{-15pt}
        \caption{Comparison of OT variants.}
        \label{fig:loro-ot-compare}
    \end{subfigure}
    \hfill
    \begin{subfigure}{.32\textwidth}
        \centering
        \includegraphics[width=\textwidth, trim=0 5 0 5, clip]{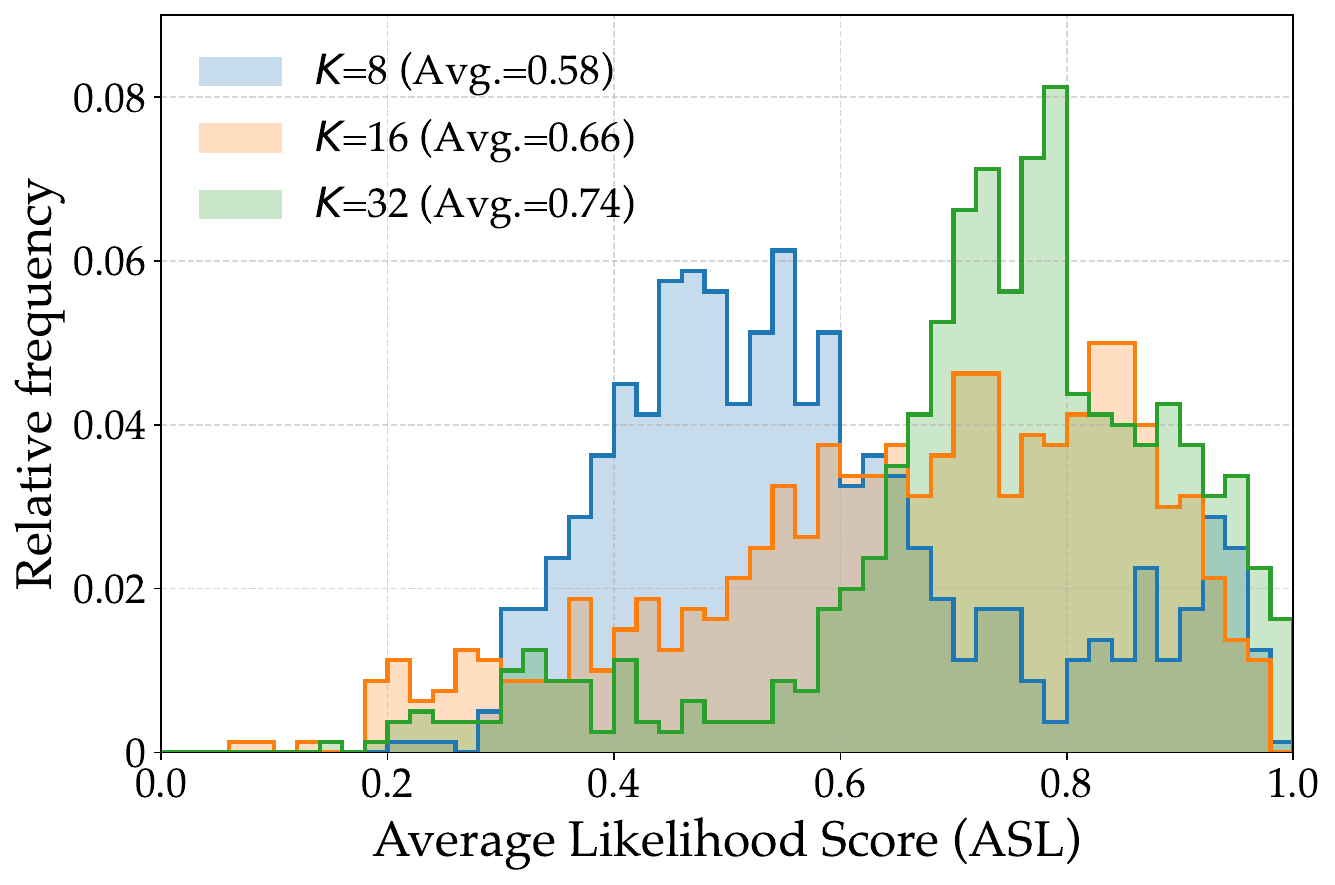}
        \vspace{-15pt}
        \caption{Comparison of sample size $K$.}
        \label{fig:loro-sample-size}
    \end{subfigure}
    \vspace{-10pt}
    \caption{Observations.
        (1) \textbf{Visualization of mapping $\textbf{P}^{(+)}$}: The \textit{sparse} mapping shows that the barycenter supports are mapped deterministically to meaningful reasoning stages.
        (2) \textbf{Comparison of OT variants}: FGW exhibits higher ASL than W and GW, demonstrating stronger expressiveness.
        (3) \textbf{Comparison of sample size $K$}: ASL improves monotonically as $K$ increases, suggesting that the barycenter reliably captures the internal reasoning pattern with sufficient sampling.}
    \vspace{-15pt}
    \label{fig:obs}
\end{figure*}

We visualize $\mathbf{P}^{(+)}$ for one question~\footnote{\textbf{Question}: If $f(3)=1$ and $f(2x)=2f(x)$ for all $x$, find $f^{-1}(64)$} in MATH500~\citep{hendrycks2021measuring} in Figure~\ref{fig:vis-p-map},
which shows a \textit{sparse} and \textit{block-diagonal} matrix indicating that the FGW barycenter supports can be mapped deterministically to meaningful reasoning stages.
To further evaluate the accuracy of such mapping, we design a Leave-One-Rollout-Out (LORO) evaluation where we randomly hold out one sampled rollout $y^{(c)}_e$ when constructing the mapping $\mathbf{P}^{(c)}$,
and then evaluate $\mathbf{P}^{(c)}$ by its average likelihood score (ASL) in predicting token labels in $y^{(c)}_e$ as follows:
\vspace{-2pt}
\begin{equation}
    \text{ASL}(y^{(c)}_e)=\sum_{i=1}^{|y^{(c)}_e|}\sum_{j=1}^{l_b}\sum_{m=1}^M\mathbf{T}^{(c)}[i,j]\mathbf{P}^{(c)}[j,m]\mathbbm{1}\{y^{(c)}_{e,i}\in S^{(c)}_m\}
\end{equation}

\vspace{-5pt}
The comparative results of LORO evaluation across different OT variants and sampling sizes on MATH500 are included in Figure~\ref{fig:loro-ot-compare} and Figure~\ref{fig:loro-sample-size}, respectively.
We observe that \textbf{1)} The FGW formulation achieve higher ASL than the Wasserstein (W) and Gromov-Wasserstein (GW) formulation, demonstrating stronger expressiveness by combining both semantic hidden states and structural positions of tokens;
\textbf{2)} The ASL under FGW formulation improves monotonically as K increases, suggesting that the barycenter reliably captures the internal reasoning pattern of current policy with sufficient sampling.
Detailed experimental settings of Figure~\ref{fig:obs} are included in Appendix~\ref{app:exp_case_setup}.

\vspace{-3pt}
\subsubsection{Anchor-Guided Linearization}
\vspace{-4pt}
The FGW formulation in Eq.~\eqref{eq:fgw_dist}, while expressive, is computationally expensive due to its quartic, non-convex GW term~\citep{peyre2016gromov} of $O(l^3)$ time complexity to solve.
To optimize efficiency, we leverage an anchor-guided linearization technique~\citep{im2026shape} that linearizes the GW term by fixing its second transport plan to pre-align reasoning state anchors.
Specifically, we treat the first and last tokens of each sampled rollout, i.e., $y^{(c)}_{k,1}$ and $y^{(c)}_{k,L_k}$, as reasoning state anchors, as they represent the shared initial and final states of all sampled rollouts under the same prompt $x$ and outcome $c$.
By pre-aligning anchors in the GW term in Eq.~\eqref{eq:fgw_dist}, we can linearize the FGW distance in Eq.~\eqref{eq:fgw_state} to a Wasserstein distance as follows:
\begin{lemma}[Anchor-Guided Linearization of FGW Distance]
    \label{lem:fgw_linear}
    $\mathrm{FGW}_{2,\alpha}$
    \footnote{For brevity, we use $\mathrm{FGW}_{2,\alpha}$ as an abbreviation for $\mathrm{FGW}_{2,\alpha}(\{\mathbf{A}_k^{(c)},\mathbf{X}_k^{(c)}\},\{\mathbf{A}_b^{(c)},\mathbf{X}_b^{(c)}\},f,g)$}
    in Eq.~\eqref{eq:fgw_state} can be linearized into the following Wasserstein distance $\mathrm{FGW}_{2,\alpha}^{\text{linear}}$
    by fixing the second transport plan $\mathbf{T}^{(c)}_{\text{anchor}}=\frac{1}{2}\left(\delta_{(1,1)}+\delta_{(L_k,L_b)}\right)$
    in its $\mathrm{GW}_2$ term that pre-aligns reasoning anchors $y^{(c)}_{k,1}$ and $y^{(c)}_{k,l_k}$.
    \begin{equation}\label{eq:fgw_linear}
        \!
        \mathrm{FGW}_{2,\alpha}^{\mathrm{linear}}
        =\!
        \min_{\mathbf{T}\in\Pi(\bs{\mu}_k,\bs{\mu}_b)}
        \mathbb{E}_{\left(y^{(c)}_{k,i},y^{(c)}_{b,j}\right)\sim\mathbf{T}}
        \left[
            (1-\alpha)g^2\left(y^{(c)}_{k,i},y^{(c)}_{b,j}\right)
            +
            2\alpha
            \left|
            \frac{i-1}{L_k-1}
            -
            \frac{j-1}{L_b-1}
            \right|^2
            \right]
    \end{equation}
\end{lemma}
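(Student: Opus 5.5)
Write the GW term of Eq.~\eqref{eq:fgw_dist} with $q=2$ as a double expectation over $(u,v)\sim\mathbf{T}$ and $(u',v')\sim\mathbf{T}'$. The linearization replaces the second copy $\mathbf{T}'$ by the fixed plan $\mathbf{T}^{(c)}_{\text{anchor}}$ and keeps only the first plan $\mathbf{T}$ free. The proof then has two parts. First, compute the inner expectation over $\mathbf{T}^{(c)}_{\text{anchor}}$ in closed form. Second, check that what remains is linear in $\mathbf{T}$, i.e.\ an ordinary optimal transport (Wasserstein) objective with a modified ground cost, and that this cost matches the one in Eq.~\eqref{eq:fgw_linear}.

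\textbf{Step 1: the inner expectation.} Since $\mathbf{T}^{(c)}_{\text{anchor}}=\frac12(\delta_{(1,1)}+\delta_{(L_k,L_b)})$, the expectation over $(u',v')$ is an average of two terms. With the structural measure $f$ from Eq.~\eqref{eq:fgw_state}, namely $f(y_{k,i},y_{k,i'})=|i-i'|/(L_k-1)$ and the analogous expression on the chain barycenter, the contribution of a pair $(i,j)$ is
\[
\tfrac12\Bigl(\tfrac{i-1}{L_k-1}-\tfrac{j-1}{L_b-1}\Bigr)^2+\tfrac12\Bigl(\tfrac{L_k-i}{L_k-1}-\tfrac{L_b-j}{L_b-1}\Bigr)^2 .
\]
To simplify the second term, set $s_i=\frac{i-1}{L_k-1}$ and $t_j=\frac{j-1}{L_b-1}$. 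Then $\frac{L_k-i}{L_k-1}=1-s_i$ and $\frac{L_b-j}{L_b-1}=1-t_j$. Hence the second difference equals $-(s_i-t_j)$, its square equals $(s_i-t_j)^2$, and both anchors contribute identically.

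\textbf{Step 2: the linear objective.} Substituting Step 1 back, the GW term becomes $\mathbb{E}_{(i,j)\sim\mathbf{T}}[c(s_i-t_j)^2]$ for an explicit constant $c$. This is linear in $\mathbf{T}$. Combining it with the Wasserstein term $(1-\alpha)\mathbb{E}_{\mathbf{T}}[g^2]$ under a single expectation gives a pure OT problem over $\Pi(\bs{\mu}_k,\bs{\mu}_b)$, which is the form of Eq.~\eqref{eq:fgw_linear}.

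\textbf{Main obstacle: the coefficient $2\alpha$.} The naive average over the two anchors gives coefficient $\alpha$, not $2\alpha$. The factor $2$ must therefore come from a convention, and I would argue for one of the following, whichever matches the paper's setup:
\begin{itemize}
\item The anchor plan is taken unnormalized, i.e.\ $\delta_{(1,1)}+\delta_{(L_k,L_b)}$, which is the natural form if $\mathbf{T}$ and $\mathbf{T}'$ are treated symmetrically.
\item The symmetric linearization of the bilinear form $\mathbf{T}\otimes\mathbf{T}$ is used: fixing $\mathbf{T}'$ in each of the two slots in turn produces two equal copies, so the first-order term carries a factor $2$.
\end{itemize}
I would state that convention explicitly and note that the minimizer is invariant to it up to a rescaling of $\alpha$. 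I would also confirm that fixing the endpoints $(1,1)$ and $(L_k,L_b)$ is consistent with the index convention $L_k=l_k$.
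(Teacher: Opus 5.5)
Your proposal follows the paper's proof: fix the second plan to the anchor plan, use $\frac{L_k-i}{L_k-1}=1-s_i$ and $\frac{L_b-j}{L_b-1}=1-t_j$ to show that both anchors contribute $(s_i-t_j)^2$, and read off a cost that is linear in $\mathbf{T}$. You also diagnose the coefficient correctly, and the paper obtains $2\alpha$ by your first convention: it explicitly multiplies the anchor expectation by a factor $2$ ``to account for both the anchor points,'' which amounts to summing rather than averaging over $(1,1)$ and $(L_k,L_b)$.
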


Detailed proof of Lemma~\ref{lem:fgw_linear} can be found in Appendix~\ref{app:proof_fgw_linearization}.
In general, fixing the second transport plan in $\mathrm{GW}_2$ turns it into a convex Wasserstein term that penalizes the difference in normalized positions of tokens across different rollouts,
which can be solved efficiently via an entropy regularization and the Sinkhorn algorithm~\citep{gabriel2019computational}.
In this way, we can obtain the linearized FGW barycenter efficiently by replacing $\text{FGW}_{2,\alpha}$ in Eq.~\eqref{eq:fgw_state} with $\text{FGW}_{2,\alpha}^{\text{linear}}$ in Eq.~\eqref{eq:fgw_linear}.

\vspace{-3pt}
\subsection{Hierarchical Credit Assignment}\label{subsec:me-ca}
\vspace{-5pt}

\begin{figure*}[t]
    \centering
    \includegraphics[width=\linewidth, trim=-10 20 -10 5, clip]{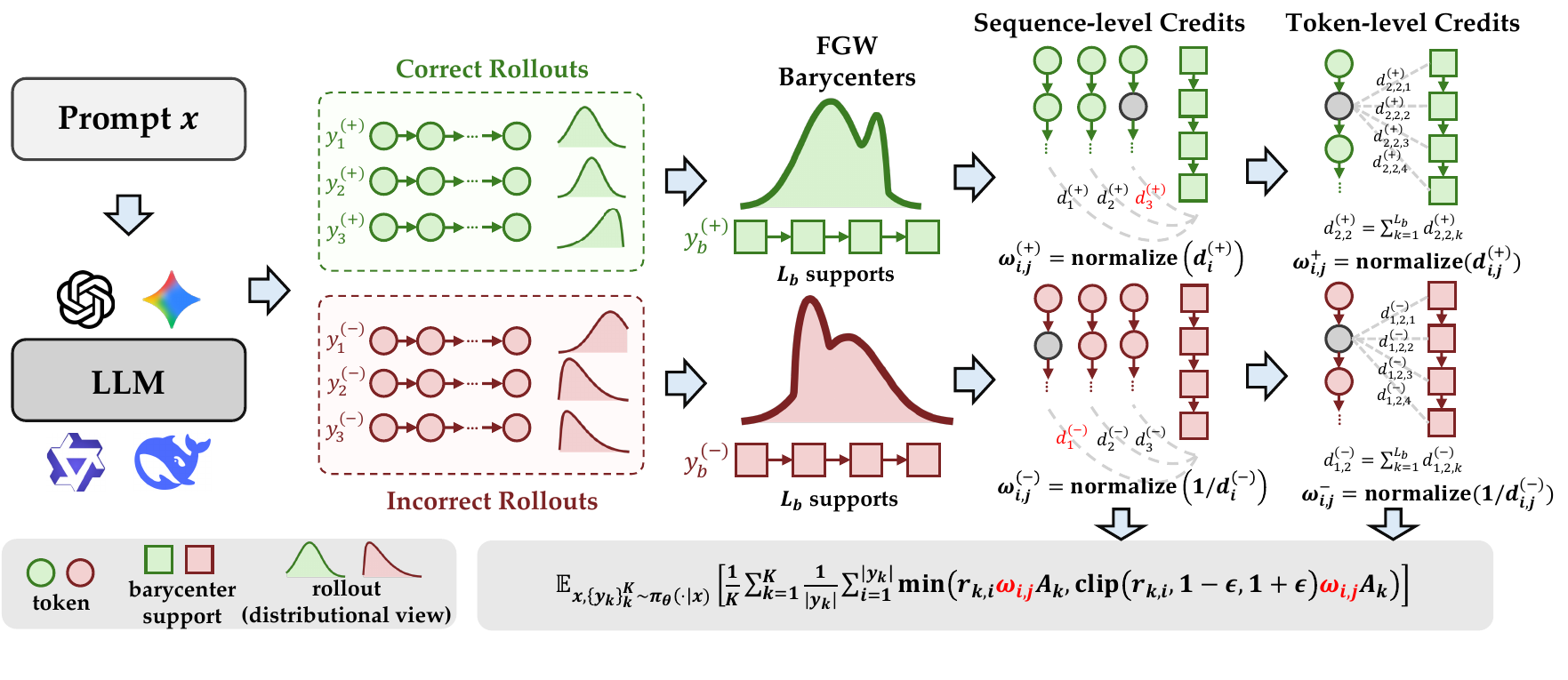}
    \vspace{-20pt}
    \caption{Overview of \algname.
        FGW barycenters are computed from the correct and incorrect rollouts as two reference distributions,
        followed by a hierarchical credit assignment approach that reweights the group advantages at both the sequence and token levels,
        based on their reasoning novelty measured by the linearized FGW distance to the barycenters.}
    \vspace{-10pt}
    \label{fig:method}
\end{figure*}

After obtaining the linearized FGW barycenters as principled reference distributions,
we measure the novelty of a reasoning span at arbitrary granularity by their contributions to the linearized FGW distance to the barycenters,
and propose a hierarchical credit assignment approach for group-based RLVR methods as shown in Figure~\ref{fig:method}.
Formally, let $\mathcal{P}^{(c)}_k$ be a partition of rollout $y_k^{(c)}$ into $M_k^{(c)}=|\mathcal{P}_k^{(c)}|$ reasoning spans,
the novelty-based credit $\omega^{(c)}_{k,\mathcal{S}}$ for a span $y_{k,\mathcal{S}}^{(c)}\subseteq y_k^{(c)}$ is defined as its contribution to the linearized FGW distance to $\G_b^{(c)}$ normalized within group $c$ as follows
\begin{equation}
    \begin{aligned}
        \omega^{(c)}_{k,\mathcal{S}}        & =\frac{\left(\sum_{k'=1}^{K}M_{k'}^{(c)}\right)\left(\sum_{i\in\mathcal{S}}\phi^{(c)}_{k,i}\right)^p}{\sum_{k'=1}^{K}\sum_{\mathcal{S}'\in\mathcal{P}_k^{(c)}}\left(\sum_{i'\in\mathcal{S}'}\phi^{(c)}_{k',i'}\right)^p}, \\
        \text{where} \quad \phi_{k,i}^{(c)} & =\sum_{j=1}^{L_b}\mathbf{T}_k^{(c)}(i,j)\left[(1-\alpha)\left\|\mathbf{X}_{k,i}^{(c)}-\mathbf{X}_{b,j}^{(c),*}\right\|_2^2+2\alpha\left(\frac{i-1}{L_k-1}-\frac{j-1}{L_b-1}\right)^2\right].
    \end{aligned}
\end{equation}
where $p=1$ for correct rollouts (i.e., $c=+$) and $p=-1$ for incorrect rollouts (i.e., $c=-$). The intuition of $p$ is to guide the RLVR process to encourage correct and novel reasoning behaviors more,
while penalizing incorrect but novel reasoning behaviors less to preserve the exploration of the LLM.
Based on the obtained novelty-based credits $\omega^{(c)}_{k,\mathcal{S}}$, we boadcast the same credit to all tokens inside the reasoning span $\mathcal{S}$, i.e., $\omega^{(c)}_{k,i}=\omega^{(c)}_{k,\mathcal{S}}$ for all $i\in\mathcal{S}$, and reweight the token-level group advantages for group-based RLVR methods as follows
\begin{equation}\label{eq:alg-obj}
    \mathbb{E}_{x,\{y_{k}\}_k^K\sim\pi_\theta(\cdot|x)}\left[\frac{1}{K}\sum_{k=1}^K\frac{1}{|y_k|}\sum_{i=1}^{|y_k|}\min\left(r_{k,i}\textcolor{red}{\omega_{k,i}}A_{k}, \text{clip}\left(r_{k,i}, 1-\epsilon, 1+\epsilon\right)\textcolor{red}{\omega_{k,i}}A_{k}\right)\right]
\end{equation}
In this paper, we mainly consider credit assignment on two levels of granularity: \textit{sequence-level} (i.e., $\mathcal{P}^{(c)}_k=\{y^{(c)}_k\}$) and \textit{token-level} (i.e., $\mathcal{P}^{(c)}_k=\{y^{(c)}_{k,i}\}_{i=1}^{l_k}$),
and name the corresponding variants as \algname-seq and \algname-tok, respectively.

\vspace{-3pt}
\subsection{Theoretical analysis}\label{subsec:me-ana}
\vspace{-5pt}
In this subsection, we provide theoretical analysis regarding the approximation error of anchor-guided linearization and the complexity of the proposed \algname.

\begin{theorem}[Approximation Error of Anchor-Guided Linearization]
    \label{thm:linearization_error}
    Let $\mathbf{T}_{\mathrm{FGW}}^*\in\Pi(\mu_k,\mu_b)$ be an optimal transport plan that minimizes the original FGW problem with uniform marginal distributions $\mu_k$ and $\mu_b$.
    The approximation error of the anchor-guided linearization in Lemma~\ref{lem:fgw_linear} is bounded as follows
    \begin{equation}
        0\leq
        \mathrm{FGW}_{2,\alpha}^{\text{linear}}(\mathcal{G}_k^{(c)},\mathcal{G}_b^{(c)})
        -
        \mathrm{FGW}_{2,\alpha}(\mathcal{G}_k^{(c)},\mathcal{G}_b^{(c)})
        \leq
        4\alpha\,\mathcal{C}(\mathbf{T}_{\mathrm{FGW}}^*),
    \end{equation}
    where
    \begin{equation}
        \mathcal{C}(\mathbf{T})
        =
        \mathbb{E}_{\mathbf{T}\otimes\mathbf{T}}
        \left[
            \frac{|I-I'|}{L_k-1}
            \frac{|J-J'|}{L_b-1}
            \mathbbm{1}\!\left\{(I-I')(J-J')<0\right\}
            \right],
        \qquad
        (I,J),(I',J')\overset{\mathrm{i.i.d.}}{\sim}\mathbf{T}.
    \end{equation}
\end{theorem}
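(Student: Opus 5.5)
The plan is to compare the two objectives plan by plan: for every fixed $\mathbf{T}\in\Pi(\mu_k,\mu_b)$, I will write both the linearized cost and the original FGW cost in terms of the normalized positions
\[
s_i=\tfrac{i-1}{L_k-1},\qquad t_j=\tfrac{j-1}{L_b-1}.
\]
The aim is to show that the two costs differ exactly by $4\alpha\,\mathcal{C}(\mathbf{T})$. Both inequalities then follow from elementary min/min comparisons.

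\textbf{Step 1: linear term as a two-copy expectation.} The Wasserstein part $(1-\alpha)\mathbb{E}_{\mathbf{T}}[g^2]$ is identical in both objectives, so only the structural terms need comparing. Let $(I,J),(I',J')$ be i.i.d.\ draws from $\mathbf{T}$, and set $a=s_I-s_{I'}$ and $b=t_J-t_{J'}$. Expanding the square gives
\[
\mathbb{E}_{\mathbf{T}\otimes\mathbf{T}}\big[(a-b)^2\big]=2\,\mathbb{E}_{\mathbf{T}}\big[(s_I-t_J)^2\big]-2\big(\mathbb{E}_{\mathbf{T}}[s_I-t_J]\big)^2 .
\]
Because both marginals are uniform, $\mathbb{E}_{\mathbf{T}}[s_I]=\mathbb{E}_{\mathbf{T}}[t_J]=1/2$, so the last term vanishes. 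Hence the linear penalty $2\alpha\,\mathbb{E}_{\mathbf{T}}[(s_I-t_J)^2]$ in Eq.~\eqref{eq:fgw_linear} equals $\alpha\,\mathbb{E}_{\mathbf{T}\otimes\mathbf{T}}[(a-b)^2]$. This is where the uniform-marginal hypothesis is used.

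\textbf{Step 2: compare with the GW term.} With $f$ the normalized path distance, the original GW term is $\alpha\,\mathbb{E}_{\mathbf{T}\otimes\mathbf{T}}[(|a|-|b|)^2]$. Subtracting the two integrands pointwise,
\[
(a-b)^2-(|a|-|b|)^2 = 2\big(|a||b|-ab\big)=4|a||b|\,\mathbbm{1}\{ab<0\}.
\]
Since $ab<0$ exactly when $(I-I')(J-J')<0$, taking expectations gives, for every $\mathbf{T}\in\Pi(\mu_k,\mu_b)$,
\[
\mathcal{L}_{\mathrm{lin}}(\mathbf{T})-\mathcal{L}_{\mathrm{FGW}}(\mathbf{T})=4\alpha\,\mathcal{C}(\mathbf{T})\ \ge 0 .
\]

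\textbf{Step 3: conclude both bounds.} For the lower bound, $\mathcal{L}_{\mathrm{lin}}\ge\mathcal{L}_{\mathrm{FGW}}$ holds pointwise, so minimizing both sides over $\Pi(\mu_k,\mu_b)$ gives $\mathrm{FGW}^{\mathrm{linear}}_{2,\alpha}\ge\mathrm{FGW}_{2,\alpha}$. For the upper bound, evaluate the linear objective at the FGW optimizer:
\[
\mathrm{FGW}^{\mathrm{linear}}_{2,\alpha}\le\mathcal{L}_{\mathrm{lin}}(\mathbf{T}^*_{\mathrm{FGW}})=\mathrm{FGW}_{2,\alpha}+4\alpha\,\mathcal{C}(\mathbf{T}^*_{\mathrm{FGW}}).
\]

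\textbf{Main obstacle.} The delicate point is the bookkeeping of constants in Step 1. The factor $2\alpha$ in Lemma~\ref{lem:fgw_linear} must match the two-copy expression exactly. This relies on the uniform-marginal centering $\mathbb{E}_{\mathbf{T}}[s_I-t_J]=0$; without it the identity acquires an extra $-2\alpha(\mathbb{E}_{\mathbf{T}}[s_I-t_J])^2$ term. I would also double-check that the convention for $\mathrm{GW}_2$ (the expectation over $\mathbf{T}\otimes\mathbf{T}$ with no extra factor) is the one in Definition~\ref{def:fgw_dist}, so that the final constant is $4\alpha$ rather than a multiple of it.
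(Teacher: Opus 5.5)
Your proposal is correct and follows the paper's proof essentially step for step. Both arguments rewrite the costs in normalized positions and use the identity $(|a|-|b|)^2=(a-b)^2-4|ab|\,\mathbbm{1}\{ab<0\}$, together with the i.i.d.\ two-copy expansion whose cross term vanishes because uniform marginals give mean $1/2$ on both sides; both then derive the pointwise identity $F_{\mathrm{lin}}(\mathbf{T})-F(\mathbf{T})=4\alpha\,\mathcal{C}(\mathbf{T})$ and close with the same min/min comparison (and your check on the $\mathrm{GW}_2$ convention is right, since Definition~\ref{def:fgw_dist} uses the plain expectation over $\mathbf{T}\otimes\mathbf{T}$, so the constant is exactly $4\alpha$).
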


The detailed proof of Theorem~\ref{thm:linearization_error} can be found in Appendix~\ref{app:proof_linearization_error}.
In general, the upper-bound of the approximation error is determined by the magnitude of \textit{order-reversing entries} in $\mathbf{T}^*_{\mathrm{FGW}}$, i.e., pairs of $(I,J)$ and $(I',J')$ such that $(I-I')(J-J')<0$.
Therefore, the error is expected to remain small as the FGW term in Eq.~\eqref{eq:fgw_state} encourages alignment between tokens of similar normalized positions, thus
producing a \textit{sparse} and \textit{block-diagonal} transport plan $\mathbf{T}_{\mathrm{FGW}}^*$ with few order-reversing entries.
\begin{theorem}[Complexity of \algname]
    \label{thm:complexity}
    The time complexity of \algname\ is $\mathcal{O}\!\left(B K L L_b (d+S)\right)$, and the space complexity of \algname\ is $\mathcal{O}\!\left(K L L_b + K L d + L_bd\right)$,
    where $K$ is the number of sampled rollouts of length $L$, $L_b$ is the barycenter size, $d$ is the hidden state dimension.
    $S$ and $B$ are the number of Sinkhorn and barycenter optimization iterations for solving Eq.~\eqref{eq:fgw_linear}, respectively.
    ~\footnote{
        Without loss of generality,
        we assume $\mathcal{O}(K^{(+)})\approx\mathcal{O}(K^{(-)})\approx\mathcal{O}(K)$
        and $\mathcal{O}(L^{(c)}_k)\approx\mathcal{O}(L)$ for all $k\in[1,K]$ and $c\in\{+,-\}$.
    }.
\end{theorem}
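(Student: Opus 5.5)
The statement to prove is the complexity theorem, Theorem~\ref{thm:complexity}. The plan is to count costs directly, one stage of \algname\ at a time, for a single outcome group $c$. The bound then carries over to both groups by the footnote's assumption $K^{(+)}\approx K^{(-)}\approx K$. Per prompt, \algname\ has three stages: (i) build the linearized cost matrices of Lemma~\ref{lem:fgw_linear}; (ii) alternate between solving $K$ entropic OT problems with Sinkhorn and updating the barycenter features $\mathbf{X}_b^{(c)}$, repeated $B$ times; (iii) compute the credits $\phi_{k,i}^{(c)}$ and $\omega^{(c)}_{k,\mathcal{S}}$ and reweight the advantages.

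\textbf{Stage (ii), cost matrices.} Each barycenter iteration first builds, for every rollout $k$, the cost matrix $\mathbf{C}_k\in\mathbb{R}^{L\times L_b}$ from Eq.~\eqref{eq:fgw_linear}. The feature part $\|\mathbf{X}_{k,i}-\mathbf{X}_{b,j}\|_2^2$ can be written as $\|\mathbf{X}_{k,i}\|^2+\|\mathbf{X}_{b,j}\|^2-2\langle\mathbf{X}_{k,i},\mathbf{X}_{b,j}\rangle$. The matrix product $\mathbf{X}_k\mathbf{X}_b^\top$ costs $O(LL_bd)$, and the positional term costs only $O(LL_b)$.

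\textbf{Stage (ii), Sinkhorn and barycenter update.} Each Sinkhorn iteration is two matrix--vector products with the Gibbs kernel $e^{-\mathbf{C}_k/\varepsilon}$, which costs $O(LL_b)$. Running $S$ iterations therefore costs $O(SLL_b)$ per rollout. The barycenter update is the standard fixed-point step for the squared $L_2$ feature cost:
\[
\mathbf{X}_b\leftarrow \mathrm{diag}(\mathbf{w}_b)^{-1}\tfrac{1}{K}\textstyle\sum_k\mathbf{T}_k^\top\mathbf{X}_k .
\]
This costs $O(KLL_bd)$. It comes from setting the gradient of the sum of linearized objectives with respect to $\mathbf{X}_b$ to zero, and the positional term does not depend on $\mathbf{X}_b$. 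Summing over the $K$ rollouts and the $B$ outer iterations gives $O(BKLL_b(d+S))$.

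\textbf{Stage (iii), credits.} Computing $\phi_{k,i}$ reuses $\mathbf{T}_k$ and $\mathbf{C}_k$, at a cost of $O(KLL_b)$ (or $O(KLL_bd)$ if the costs are recomputed). The normalization and broadcast steps cost $O(KL)$. The final-step extraction of hidden states reuses the forward pass needed for the importance ratios, so I would argue it adds no extra cost. All of these terms are dominated by the bound above.

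\textbf{Space.} Storage consists of the $K$ transport plans and cost/kernel matrices, $O(KLL_b)$; the rollout hidden states, $O(KLd)$; and the barycenter, $O(L_bd)$. Sinkhorn scaling vectors add only $O(K(L+L_b))$. Each $\mathbf{C}_k$ can be built in place without materializing an $L\times L_b\times d$ tensor, which gives the stated space bound.

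\textbf{Main obstacle.} The delicate point is justifying that the barycenter update has the closed form above under the linearized objective, so that no inner gradient loop adds an extra factor. I would handle this by noting that, with $\mathbf{T}_k$ fixed, the objective is a convex quadratic in $\mathbf{X}_b$. I would also make explicit that the constant factors for $c\in\{+,-\}$ and the entropic regularization do not change the asymptotic bound.
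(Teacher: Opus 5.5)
Your proposal is correct and follows essentially the same accounting as the paper's proof. Per barycenter iteration, the paper counts $\mathcal{O}(LL_bd)$ to build each cost matrix, $\mathcal{O}(SLL_b)$ per Sinkhorn solve and $\mathcal{O}(KLL_bd)$ for the $\mathbf{X}_b$ update, multiplied by $B$, plus storage of the plans, hidden states and barycenter. Your additions only make explicit what the paper states without justification: the closed-form barycenter update (correct, since the positional term does not depend on $\mathbf{X}_b$), the dominated credit-computation cost, and building each cost matrix in place.
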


The detailed proof of Theorem~\ref{thm:complexity} can be found in Appendix~\ref{app:proof_complexity}.
Notably, \algname\ scales \textit{linearly} w.r.t. the rollout length $L$ in both time and space.
With a fixed and relatively small barycenter size $L_b=32$,
\algname\ enables efficient and scalable credit assignment for group-based RLVR methods across various reasoning tasks, as shown in Section~\ref{subsec:exp_efficiency}.

\section{Experiments}\label{sec:exp}

In this section, we carry out extensive experiments and analysis to evaluate the proposed \algname\ from the following aspects:
\begin{itemize}[nosep, wide=0pt, leftmargin=*, after=\strut, label=\textbullet]
    \item \textbf{Q1:} How effective is \algname\ compared to existing credit assignment approaches?
    \item \textbf{Q2:} How does \algname\ affect the training dynamics of group-based RLVR methods?
    \item \textbf{Q3:} How does \algname\ balance between effectiveness and efficiency?
    \item \textbf{Q4:} How good is the FGW distance as a novelty metrics for \algname?
    \item \textbf{Q5:} How sensitive is \algname\ to different hyperparameter settings and design choices?
\end{itemize}

\subsection{Experimental Setup}\label{subsec:exp_setup}
For \textbf{datasets}, we train group-based RLVR methods on the MATH dataset~\citep{hendrycks2021measuring},
and evaluate \algname\ across five different math reasoning benchmarks, including MATH500, AMC23, AIME24, AIME25~\citep{lightman2024let}, and Minerva Math~\citep{lewkowycz2022solving};
for \textbf{models}, we benchmark across three models, including Qwen3-1.7B-Base, Qwen3-4B-Base, and Qwen3-8B-Base~\citep{yang2025qwen3};
for \textbf{baselines}, we compare \algname\ with three existing credit assignment methods,
including GRPO-$\lambda$~\citep{parthasarathi2025grpo}, Entropy Adv.~\citep{cheng2026reasoning}, and SHEAR~\citep{chen2026hidden},
when applied to three different group-based RLVR methods, including GRPO~\citep{shao2024deepseekmath}, Dr.GRPO~\citep{liu2025understanding}, and DAPO~\citep{yu2026dapo}.
GRPO is adopted as the default RLVR baseline in all experiments unless noted otherwise;
for \textbf{evaluation metrics}, we adopt the average accuracy for MATH500 and Minerva Math, and the Pass@16 for AMC23, AIME24, and AIME25.
More detailed experimental setup are provided in Appendix~\ref{app:exp_add_setup}.

\subsection{Main Results}\label{subsec:exp_main}
\begin{table*}[t]
    \centering
    \caption{Benchmark results of credit assignment methods on GRPO~\citep{shao2024deepseekmath}.
        The 1st/2nd best results are highlighted in \textbf{bold}/\underline{underline}, respectively.
        Our methods are highlighted in \colorbox{red!8}{\!red box\!}.
    }
    \label{tab:bench_grpo}
    \vspace{-8pt}
    \begingroup
    \newcommand{\benchstd}[1]{\,{\scriptsize\textcolor{gray}{$\pm$#1}}}
    \setlength{\tabcolsep}{8pt}
    \renewcommand{\arraystretch}{1.15}
    \resizebox{\linewidth}{!}{%
        \begin{tabular}{l|ccccc|c}
            \noalign{\hrule height 1pt}
            \textbf{Method}    & \textbf{MATH500}               & \textbf{Minerva Math}          & \textbf{AMC23}                 & \textbf{AIME24}                & \textbf{AIME25}                & \textbf{Average} \\
            \hline
            \multicolumn{7}{c}{\rule{0pt}{2.5ex}\textbf{\textit{Qwen3-1.7B-Base}}}                                                                                                                                     \\
            \hline
            GRPO               & 64.8\benchstd{2.8}             & 18.2\benchstd{1.7}             & 72.5\benchstd{2.2}             & 14.0\benchstd{1.3}             & 12.7\benchstd{2.4}             & 36.4             \\
            ~~+~GRPO-$\lambda$ & 65.7\benchstd{1.2}             & 17.6\benchstd{1.5}             & 72.0\benchstd{1.0}             & \underline{16.0}\benchstd{0.6} & 14.0\benchstd{1.2}             & 37.1             \\
            ~~+~Entropy Adv.   & 66.5\benchstd{1.8}             & 19.1\benchstd{1.4}             & 74.0\benchstd{1.8}             & 15.3\benchstd{1.2}             & 13.3\benchstd{2.1}             & 37.6             \\
            ~~+~SHEAR          & 65.9\benchstd{1.3}             & 18.7\benchstd{0.9}             & 74.5\benchstd{0.4}             & 12.7\benchstd{0.9}             & 13.3\benchstd{1.5}             & 37.0             \\
            \rowcolor{red!8}
            ~~+~\algname-seq   & \textbf{67.3}\benchstd{2.0}    & \underline{20.3}\benchstd{1.2} & \underline{76.0}\benchstd{1.4} & \textbf{18.7}\benchstd{0.6}    & \underline{16.0}\benchstd{1.3} & 39.7             \\
            \rowcolor{red!8}
            ~~+~\algname-tok   & \underline{67.1}\benchstd{2.1} & \textbf{21.7}\benchstd{1.3}    & \textbf{77.0}\benchstd{1.6}    & \textbf{18.7}\benchstd{0.6}    & \textbf{16.7}\benchstd{1.2}    & \textbf{40.2}    \\
            \hline
            \multicolumn{7}{c}{\rule{0pt}{2.5ex}\textbf{\textit{Qwen3-4B-Base}}}                                                                                                                                       \\
            \hline
            GRPO               & 78.1\benchstd{1.5}             & 31.9\benchstd{1.8}             & 84.5\benchstd{1.7}             & 33.3\benchstd{1.9}             & 29.3\benchstd{2.1}             & 51.4             \\
            ~~+~GRPO-$\lambda$ & 76.9\benchstd{0.3}             & 32.1\benchstd{0.6}             & 86.5\benchstd{1.1}             & 32.7\benchstd{0.6}             & 30.0\benchstd{0.6}             & 51.6             \\
            ~~+~Entropy Adv.   & 77.5\benchstd{1.6}             & 33.0\benchstd{1.1}             & 86.0\benchstd{1.9}             & 34.7\benchstd{1.3}             & 31.3\benchstd{1.3}             & 52.5             \\
            ~~+~SHEAR          & 78.5\benchstd{0.8}             & 32.8\benchstd{1.3}             & 85.0\benchstd{0.9}             & 33.3\benchstd{1.5}             & 30.7\benchstd{1.2}             & 52.1             \\
            \rowcolor{red!8}
            ~~+~\algname-seq   & \underline{80.9}\benchstd{1.7} & \textbf{34.7}\benchstd{2.3}    & \underline{87.5}\benchstd{1.1} & \underline{36.7}\benchstd{1.5} & \underline{32.7}\benchstd{0.9} & 54.5             \\
            \rowcolor{red!8}
            ~~+~\algname-tok   & \textbf{82.1}\benchstd{1.1}    & \underline{34.1}\benchstd{1.5} & \textbf{88.0}\benchstd{1.3}    & \textbf{37.3}\benchstd{1.2}    & \textbf{34.0}\benchstd{1.5}    & \textbf{55.1}    \\
            \hline
            \multicolumn{7}{c}{\rule{0pt}{2.5ex}\textbf{\textit{Qwen3-8B-Base}}}                                                                                                                                       \\
            \hline
            GRPO               & 81.8\benchstd{1.3}             & 32.2\benchstd{0.9}             & 89.5\benchstd{2.0}             & 52.0\benchstd{1.3}             & 40.0\benchstd{1.7}             & 59.1             \\
            ~~+~GRPO-$\lambda$ & 82.1\benchstd{0.1}             & 32.4\benchstd{0.5}             & 90.5\benchstd{0.4}             & 53.3\benchstd{0.6}             & 41.3\benchstd{0.6}             & 59.9             \\
            ~~+~Entropy Adv.   & 83.9\benchstd{0.4}             & 33.8\benchstd{1.3}             & 91.0\benchstd{1.9}             & 52.7\benchstd{1.7}             & 42.7\benchstd{1.3}             & 60.8             \\
            ~~+~SHEAR          & 82.9\benchstd{0.7}             & 32.0\benchstd{1.9}             & 90.0\benchstd{2.0}             & 54.7\benchstd{1.3}             & 42.0\benchstd{1.2}             & 60.3             \\
            \rowcolor{red!8}
            ~~+~\algname-seq   & \underline{84.7}\benchstd{0.3} & \underline{34.6}\benchstd{1.1} & \underline{92.5}\benchstd{2.4} & \underline{58.0}\benchstd{1.2} & \underline{43.3}\benchstd{1.9} & 62.6             \\
            \rowcolor{red!8}
            ~~+~\algname-tok   & \textbf{85.2}\benchstd{0.5}    & \textbf{35.0}\benchstd{1.6}    & \textbf{94.0}\benchstd{1.9}    & \textbf{58.7}\benchstd{1.2}    & \textbf{46.7}\benchstd{1.7}    & \textbf{63.9}    \\
            \noalign{\hrule height 1pt}
        \end{tabular}%
    }
    \endgroup
\end{table*}

\paragraph{Benchmark Results.} The benchmark results of \algname\ and existing credit assignment methods on GRPO~\citep{shao2024deepseekmath} are shown in Table~\ref{tab:bench_grpo}. We observe that
\textbf{1) \algname\ consistently achieves the best performance across all datasets and LLMs.}
Specifically, \algname\ consistently improves GRPO by up to 4.8\% in the average performance, while also outperforming all
credit assignment baselines with up to 4.0\% improvement.
\textbf{2) Local token signals alone can be misleading for credit assignment.}
While all credit assignment baselines, on average, improves the original GRPO,
they can sometimes degrade the performance of GRPO in individual cases, e.g., GRPO-$\lambda$ on Qwen3-4B-Base for AIME24, and Entropy Adv. on Qwen3-1.7B-Base for MATH500,
showing that relying solely on local structural or uncertainty measure of tokens for credit assignment may be insufficient or even harmful.
\textbf{3) Effective credit assignment at token level leads to superior performance.}
We can see that \algname-tok outperforms \algname-seq on most datasets and LLMs, demonstrating that our method effectively captures novel reasoning behaviors at token level,
thus leading to improved reasoning performance.
Additional results on Dr.GRPO and DAPO are provided in Appendix~\ref{app:exp_add_bench}, which further demonstrates the effectiveness of \algname.

\paragraph{Training Dynamics.}
\begin{figure*}[t]
    \vspace{-5pt}
    \centering
    \includegraphics[width=\linewidth, trim=0 5 5 5, clip]{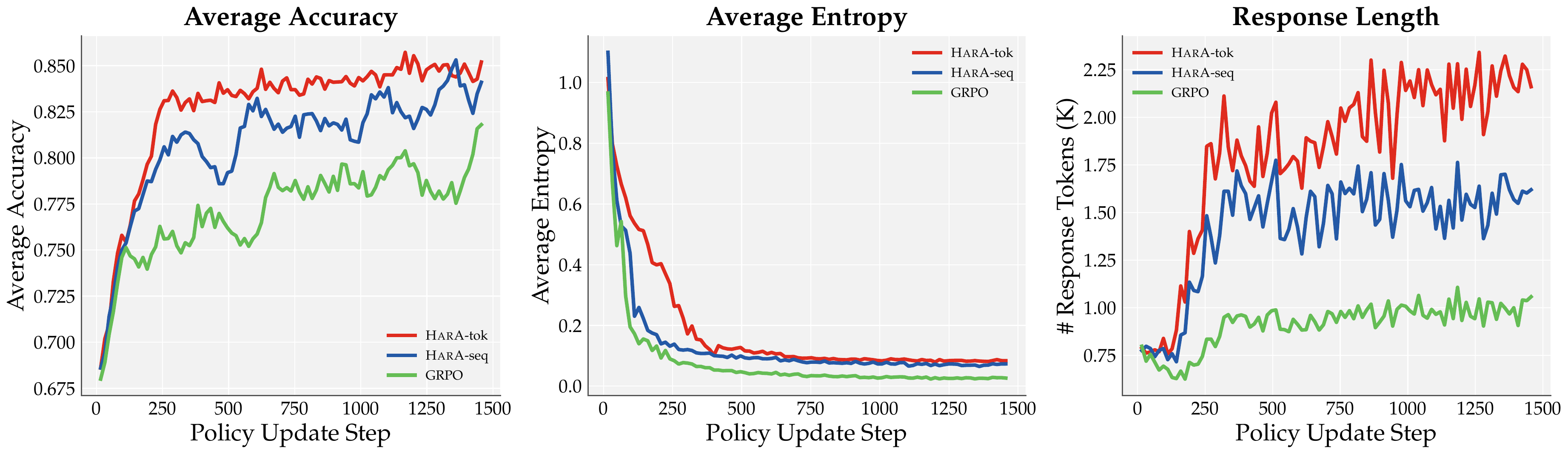}
    \vspace{-15pt}
    \caption{Training dynamics of \algname\ and GRPO on MATH500 using Qwen3-8B-Base.}
    \vspace{-15pt}
    \label{fig:exp_dynamics}
\end{figure*}

To further understand how \algname\ affects the training of group-based RLVR methods,
we visualize the training dynamics of GRPO and \algname\ on MATH500 using Qwen3-8B-Base in Figure~\ref{fig:exp_dynamics},
tracking three key metrics: \textit{accuracy}, \textit{response length}, and \textit{entropy} of the trained LLM.
We can oberve that \textbf{1) \algname\ enables faster convergence and improved final performance.} \algname\ identifies novel reasoning rollouts and tokens to emphasize during training,
thus converges faster and achieve sa better final performance than baseline GRPO.
\textbf{2) \algname\ leads to more extensive exploration.} We can see that \algname\ maintains higher entropy than baseline GRPO during training, thus preserving more exploration capability of the trained model.
Moreover, \algname\ generates longer responses than baseline GRPO, indicating that it encourages the model to explore more diverse reasoning paths by  identifying and emphasizing on novel reasoning behaviors.

\subsection{Efficiency Results}\label{subsec:exp_efficiency}
\begin{figure*}[t]
    \centering
    \begin{minipage}[t]{0.3\textwidth}
        \vspace{0pt}
        \centering
        \includegraphics[width=\linewidth, trim=5 5 5 5, clip]{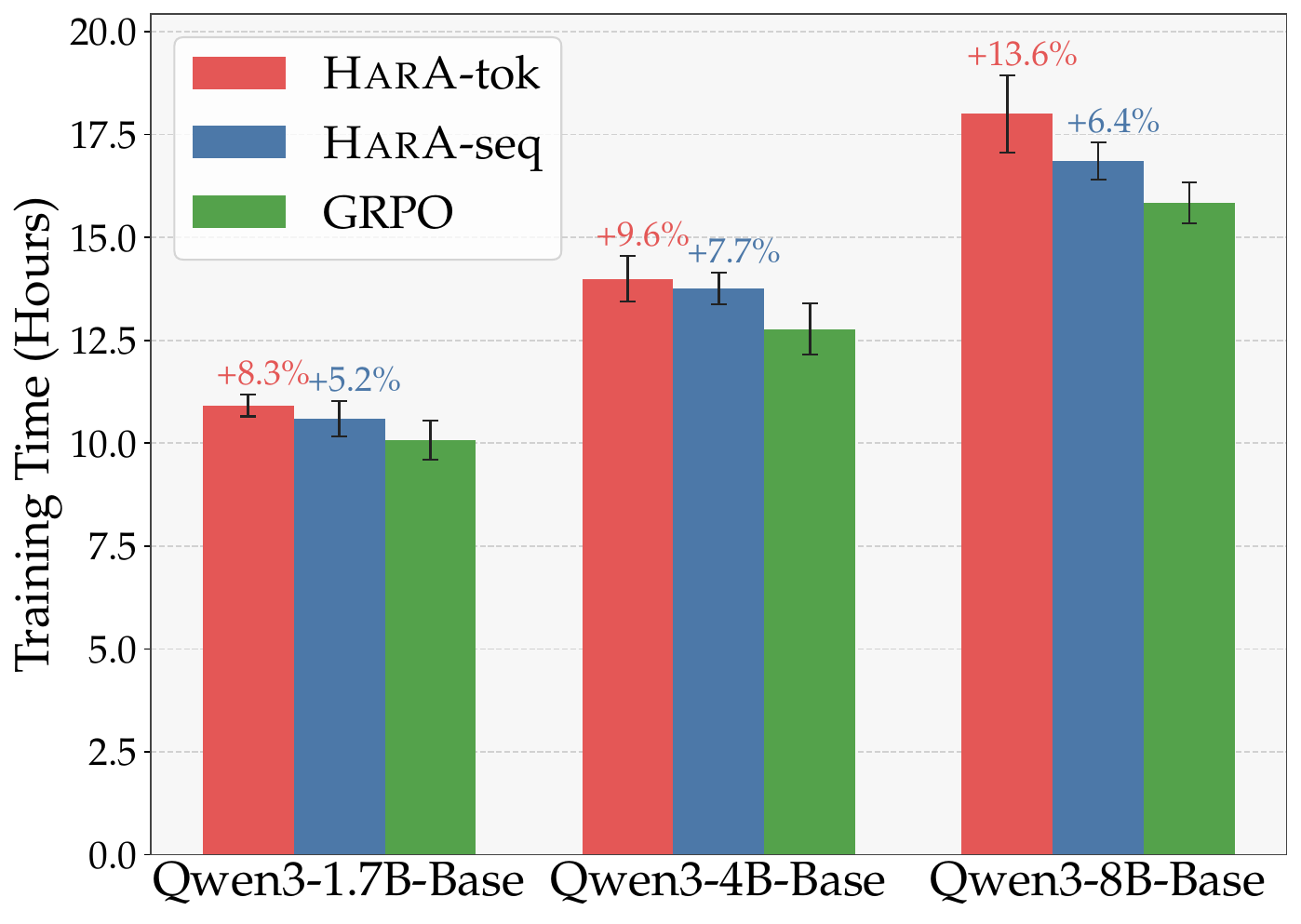}
        \vspace{-18pt}
        \captionof{figure}{Training overhead of \algname\ on MATH.\label{fig:train_time_scale}}
    \end{minipage}
    \hfill
    \begin{minipage}[t]{0.3\textwidth}
        \vspace{0pt}
        \centering
        \includegraphics[width=\linewidth, trim=5 5 5 5, clip]{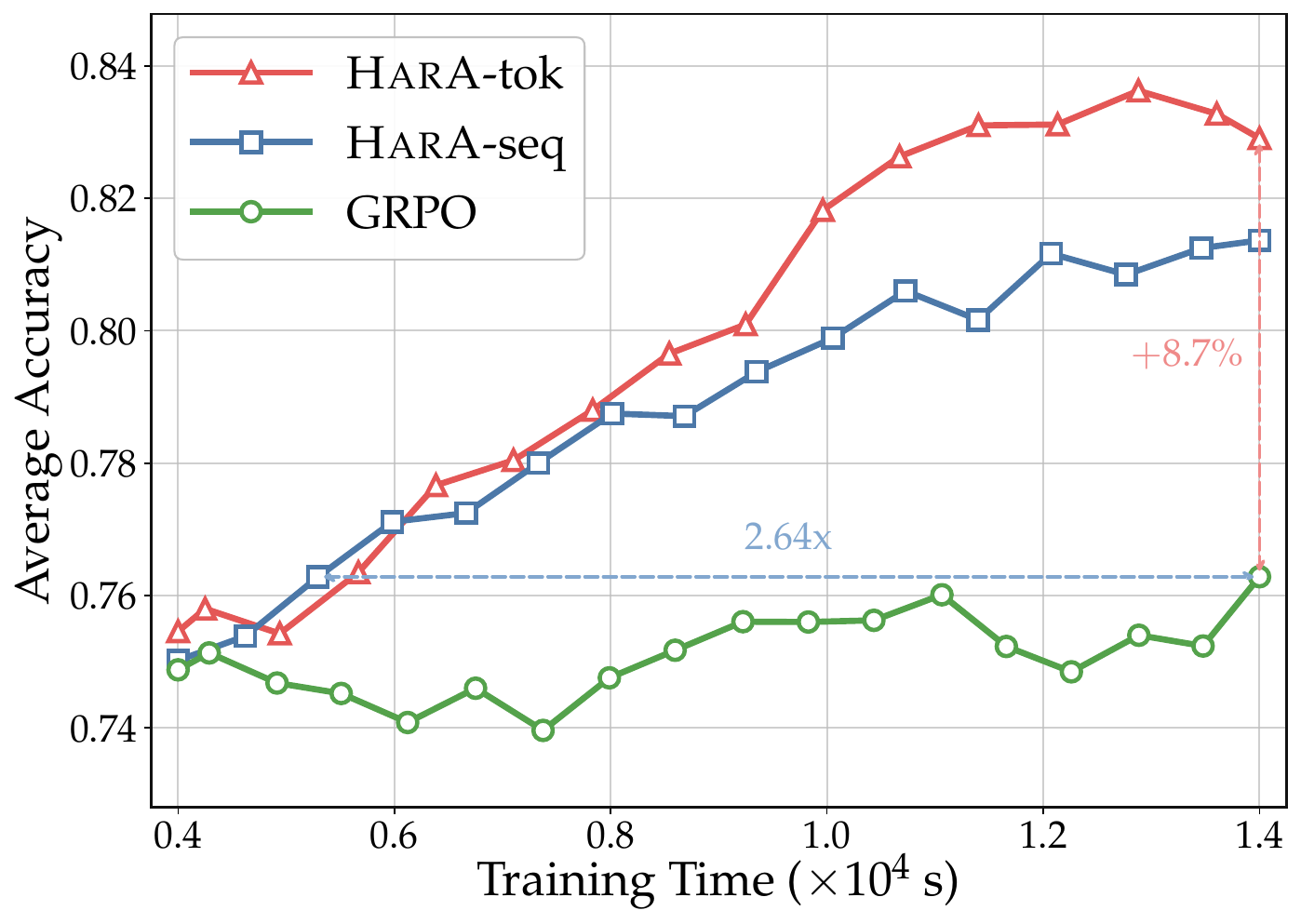}
        \vspace{-18pt}
        \captionof{figure}{Average accuracy vs. training time on MATH.}
        \label{fig:acc_vs_time}
    \end{minipage}
    \hfill
    \begin{minipage}[t]{0.36\textwidth}
        \vspace{0pt}
        \centering
        \captionof{table}{Comparison of variants of \algname\ on Qwen3-8B-Base.}
        \label{tab:exp_variants}
        \vspace{-10pt}
        \renewcommand{\arraystretch}{1.15}
        \resizebox{\linewidth}{!}{%
            \begin{tabular}{l|cc}
                \noalign{\hrule height 1pt}
                \textbf{Method}      & \textbf{MATH500} & \textbf{AMC23} \\
                \hline
                GRPO                 & 81.8             & 89.5           \\
                ~~+~\algname(W)-seq  & 82.3             & 90.0           \\
                ~~+~\algname(GW)-seq & 80.6             & 87.0           \\
                ~~+~\algname(H)-seq  & 81.2             & 86.5           \\
                \rowcolor{red!8}
                ~~+~\algname-seq     & \textbf{84.7}    & \textbf{92.5}  \\
                \noalign{\hrule height 1pt}
                ~~+~\algname(W)-tok  & 83.0             & 90.5           \\
                ~~+~\algname(GW)-tok & 79.7             & 87.0           \\
                \rowcolor{red!8}
                ~~+~\algname-tok     & \textbf{85.2}    & \textbf{94.0}  \\
                \noalign{\hrule height 1pt}
            \end{tabular}%
        }
    \end{minipage}
    \vspace{-15pt}
\end{figure*}

As shown in Figure~\ref{fig:train_time_scale}, \algname\ introduces a mild additional overhead of up to 13.6\% in total training time compared to baseline GRPO,
making it practical to be applied to standard RLVR training.
Although \algname\ does introduce additional overhead, Figure~\ref{fig:acc_vs_time} shows that, compared with GRPO, \algname\ reaches the \textbf{same accuracy with up to 2.64$\times$ speed-up} in training time,
and achieves an \textbf{accuracy improvement of 8.7\% using the same training time}. This indicates that \algname\ maintains a good balance between effectiveness and efficiency for improving group-based RLVR methods.

\subsection{Further Studies}\label{subsec:exp_further}
\paragraph{Comparision of Novelty Metrics.}
To verify the effectiveness of the FGW formulation in \algname, we compare the performance of different variants of \algname\ using different novelty metrics for $\omega$ in Eq.~\eqref{eq:alg-obj},
including Wasserstein (W) and Gromov-Wasserstein (GW) distances in Eq.~\ref{eq:fgw_dist}, and an additional metrics defined as the embedding distance between the barycenter and the last hidden state of each rollout (H),
i.e., $\omega_{k,i}=\|\mathbf{X}_{k,L_k}^{(c)}-\mathbf{X}_{b'}^{(c)}\|_2^2$ where $\mathbf{X}_{b'}^{(c)}=\frac{1}{K}\sum_{k'=1}^K\mathbf{X}_{k',L_k}^{(c)}$.
The results are included in Table~\ref{tab:exp_variants}, showing that \textbf{1) the FGW distance adopted by \algname\ is the most effective novelty metric for credit assignment} with consistent outperformance over all other metrics.
\textbf{2) The GW distance, while effective in aligning tokens in Figure~\ref{fig:obs}, cannot capture semantic novelty of a reasoning behavior}, resulting in degraded performance even compared to baseline GRPO.

\paragraph{Additional Studies.} Due to space constraints, we present additional studies of \algname,
including \textbf{parameters studies} on the FGW trade-off parameter $\alpha$, and an \textbf{ablation study on the anchor-guided linearization}, in Appendix~\ref{app:exp_add_study}.
\section{Related Work}\label{sec:related}

\paragraph{Fused Gromov-Wasserstein Barycenter.}
Fused Gromov-Wasserstein (FGW) barycenter has been predominantly used in graph learning and time-series modeling. Specifically,
\citep{brogat2022learning} leverage an input-dependent FGW barycenters for interpolatation within the space of attributed graphs;
\citep{cheng2021dynamical} use dynamically weighted Wasserstein barycenters to model gradual transitions in multivariate time series.
\citep{thual2022aligning} introduce fused unbalanced Gromov-Wasserstein transport to align individual cortical surfaces;
\citep{zeng2024hierarchical} leverage an FGW barycenter for cluster-level multi-network alignment.

\paragraph{Credit Assignment for RLVR.} Existing credit assignment methods for RLVR operate on sequence, span, and token levels.
For sequence-level methods, \citep{liu2026leveraging} measures intra-group error diversity to reshape the advantage signal of incorrect rollouts.
For span-level methods, \citep{chen2026hidden} emphasizes on spans that are more distant from the opposing group based on Wasserstein distance,
For token-level methods, \citep{parthasarathi2025grpo} assigns higher credits to nearby tokens and propagates such credits to preceding tokens with exponential decay,
while \citep{cheng2026reasoning} and \citep{xie2026unlocking} emphasize on tokens with higher entropy-based uncertainty.
While promising, none of them identifies semantically novel reasoning behaviors at the finest token level.
\section{Conclusion}\label{sec:con}
In this paper, we propose \algname, a novel hierarchical credit assignment method for group-based RLVR methods.
\algname\ represents each sampled rollout as a distribution over the hidden states and locations of tokens,
and leverages the Fused Gromov-Wasserstein (FGW) barycenters of all rollouts of the same outcome to capture the internal reasoning patterns in the latent space.
By measuring the semantic novelty of a reasoning element based on its contribution to the FGW distance to the barycenter,
\algname\ reweighs token-level advantage of group-based RLVR methods to encourage novel reasoning behaviors at flexible granularities.
Extensive experiments demonstrate that our plug-and-play method significantly improves the performance of diverse group-based RLVR methods while maintaining a good balance between effectiveness and efficiency.
\section*{AI use statement}

In this work, generative AI tools are used solely for polishing the grammar and readability of our work.
We have not used generative AI tools for other tasks with required or recommended disclosure.
We have reviewed all AI-assisted work, and take full responsibility for the final content of this work,
including text, claims and artifacts.

\bibliography{sections/reference}
\bibliographystyle{iclr2027_conference}

\appendix
\newpage
\section{Proof}\label{app:proof}
\subsection{Proof of Lemma~\ref{lem:fgw_linear}}\label{app:proof_fgw_linearization}

\begin{lemma*}[Anchor-Guided Linearization of FGW Distance]
    $\mathrm{FGW}_{2,\alpha}$ in Eq.~\eqref{eq:fgw_state} can be linearized into the following Wasserstein distance $\mathrm{FGW}_{2,\alpha}^{\text{linear}}$
    by fixing the second transport plan $\mathbf{T}^{(c)}_{\text{anchor}}=\frac{1}{2}\left(\delta_{(1,1)}+\delta_{(l_k,l_b)}\right)$ in its $\mathrm{GW}_2$ term that pre-aligns reasoning anchors $y^{(c)}_{k,1}$ and $y^{(c)}_{k,l_k}$
    \begin{equation}
        \!\!\!
        \mathrm{FGW}_{2,\alpha}^{\mathrm{linear}}\!
        =\!
        \min_{\mathbf{T}\in\Pi(\bs{\mu}_k,\bs{\mu}_b)}
        \mathbb{E}_{\left(y^{(c)}_{k,i},y^{(c)}_{b,j}\right)\sim\mathbf{T}}
        \left[
            (1-\alpha)g^2\left(y^{(c)}_{k,i},y^{(c)}_{b,j}\right)
            +
            2\alpha
            \left|
            \frac{i-1}{l_k-1}
            -
            \frac{j-1}{l_b-1}
            \right|^2
            \right]\!
    \end{equation}
\end{lemma*}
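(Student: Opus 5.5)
Our plan is a direct computation: substitute the fixed anchor plan into the quadratic GW term and simplify. First we make the GW term explicit as a bilinear form in two transport plans,
\[
\mathrm{GW}_2(\mathbf{T},\mathbf{T}',f)=\sum_{i,j}\sum_{i',j'}\mathbf{T}[i,j]\,\mathbf{T}'[i',j']\bigl(f(y_{k,i},y_{k,i'})-f(y_{b,j},y_{b,j'})\bigr)^2,
\]
so that the original FGW objective corresponds to the diagonal $\mathbf{T}'=\mathbf{T}$. The linearization consists in replacing the second copy $\mathbf{T}'$ by the fixed plan $\mathbf{T}^{(c)}_{\text{anchor}}=\frac12(\delta_{(1,1)}+\delta_{(l_k,l_b)})$, which pairs the first tokens with each other and the last tokens with each other. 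With $\mathbf{T}'$ fixed, the term becomes linear in $\mathbf{T}$: it equals $\mathbb{E}_{(i,j)\sim\mathbf{T}}[c(i,j)]$ for the cost
\[
c(i,j)=\tfrac12\bigl(f(i,1)-f(j,1)\bigr)^2+\tfrac12\bigl(f(i,l_k)-f(j,l_b)\bigr)^2 .
\]

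Next we evaluate this cost using the normalized chain distance $f(y_{k,i},y_{k,i'})=|i-i'|/(l_k-1)$, together with its analogue on the barycenter chain, which has length $l_b$. Set $s=\frac{i-1}{l_k-1}$ and $t=\frac{j-1}{l_b-1}$, both in $[0,1]$. The first anchor gives $f(i,1)=s$ and $f(j,1)=t$. The last anchor gives $f(i,l_k)=\frac{l_k-i}{l_k-1}=1-s$ and likewise $f(j,l_b)=1-t$. Both squared differences therefore equal $(s-t)^2$, so $c(i,j)=(s-t)^2$.

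Multiplying by $\alpha$ and adding the Wasserstein term $(1-\alpha)g^2$ gives a single cost linear in $\mathbf{T}$. Minimizing it over $\Pi(\bs{\mu}_k,\bs{\mu}_b)$ is a standard Wasserstein (linear OT) problem with cost $(1-\alpha)g^2+\alpha'(s-t)^2$. This is exactly the claimed form, once the constant in front of the positional term is matched.

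The main obstacle is this constant: our computation gives $\alpha(s-t)^2$, not $2\alpha(s-t)^2$. The factor $2$ depends on convention. One way to obtain it is to expand $\mathbf{T}\otimes\mathbf{T}$ symmetrically and linearize both copies (taking $\mathbf{T}\otimes\mathbf{T}_{\text{anchor}}+\mathbf{T}_{\text{anchor}}\otimes\mathbf{T}$, i.e.\ the first-order, gradient-style linearization). Another is to use the unnormalized anchor plan $\delta_{(1,1)}+\delta_{(l_k,l_b)}$. We would adopt the symmetric two-copy linearization as the definition, since it is the natural tangent linearization of the quadratic form and yields exactly $2\alpha$. We would state this choice explicitly and note that it only rescales $\alpha$.
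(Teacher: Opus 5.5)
Your proposal is correct and follows the paper's route. Like the paper, you substitute the fixed anchor plan into the $\mathrm{GW}_2$ term and use $f(i,1)=s$ and $f(i,l_k)=1-s$, so that both anchor terms collapse to $(s-t)^2$. The factor-of-$2$ discrepancy you flag is real: the paper simply inserts an explicit ``scaling factor of 2 to account for both the anchor points'' in front of $\mathbb{E}_{\mathbf{T}\otimes\mathbf{T}_{\mathrm{anchor}}}$, which amounts to your unnormalized-plan option. Your symmetric tangent linearization also gives exactly $2\alpha$, and on more principled grounds. This holds because the GW integrand is symmetric under swapping the two copies, and the quadratic form vanishes at $\mathbf{T}_{\mathrm{anchor}}$.
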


\begin{proof}
    Based on Eq.~\eqref{eq:fgw_dist}, the $\mathrm{FGW}_{2,\alpha}$ objective between the graph representation rollout
    $\G^{(c)}_k$ of rollout $y^{(c)}_k$ and the barycenter $\G_b$ can be expanded as follows
    \begin{align}\label{eq:expand_fgw}
        \mathrm{FGW}_{2,\alpha}
        =
        \min_{\mathbf{T}\in\Pi(\mu_k,\mu_b)}
        \Bigg\{
         & (1-\alpha)
        \mathbb{E}_{(y_{k,i}^{(c)},y_{b,j}^{(c)})\sim\mathbf{T}}
        \left[
            g^2\!\left(y_{k,i}^{(c)},y_{b,j}^{(c)}\right)
            \right]
        \nonumber     \\
         & +
        \alpha
        \underbrace{
            \mathbb{E}_{\substack{
                    (y_{k,i}^{(c)},y_{b,j}^{(c)})\sim\mathbf{T}\\
                    (y_{k,i'}^{(c)},y_{b,j'}^{(c)})\sim\mathbf{T}
                }}
            \left[
                \left|
                f\!\left(y_{k,i}^{(c)},y_{k,i'}^{(c)}\right)
                -
                f\!\left(y_{b,j}^{(c)},y_{b,j'}^{(c)}\right)
                \right|^2
                \right]
        }_{\mathrm{GW}_2}
        \Bigg\}.
    \end{align}

    $\mathrm{FGW}_{2,\alpha}$ is difficult to solve as $\mathrm{GW}_2$ is non-convex and quadratic w.r.t $\mathbf{T}$.
    To improve efficiency, we linearize $\mathrm{GW}_2$ by fixing its second transport plan to $\mathbf{T}_{\mathrm{anchor}}^{(c)}=\frac{1}{2}\left(\delta_{(1,1)}+\delta_{(l_k,l_b)}\right)$
    that pre-aligns the reasoning anchors by assigning mass only to the two entries $(1,1)$ and $(l_k,l_b)$ that aligns $(y^{(c)}_{k,1}, y^{(c)}_{b,1})$ and $(y^{(c)}_{k,l_k},y^{(c)}_{b,l_b})$,
    In this way, the linearized GW term $\mathrm{GW}_2^{\mathrm{linear}}$ can be derived as follows
    \begin{align}\label{eq:derive_linear_gw}
        \mathrm{GW}_2^{\mathrm{linear}}
        \!\! & =\!2\cdot\mathbb{E}_{\substack{
                                (y_{k,i}^{(c)},y_{b,j}^{(c)})\sim\mathbf{T}\\
                                (y_{k,i'}^{(c)},y_{b,j'}^{(c)})\sim\mathbf{T}_{\mathrm{anchor}}^{(c)}
                            }}
        \left[
            \left|
            f\!\left(y_{k,i}^{(c)},y_{k,i'}^{(c)}\right)
            -
            f\!\left(y_{b,j}^{(c)},y_{b,j'}^{(c)}\right)
            \right|^2
            \right]
        \nonumber                                                                                    \\
             & =\!
        \mathbb{E}_{(y_{k,i}^{(c)},y_{b,j}^{(c)})\sim\mathbf{T}}
        \left[
            \sum_{(i',j')\in\{(1,1),(l_k,l_b)\}}
            \left|
            f\!\left(y_{k,i}^{(c)},y_{k,i'}^{(c)}\right)
            -
            f\!\left(y_{b,j}^{(c)},y_{b,j'}^{(c)}\right)
            \right|^2
            \right]
        \nonumber                                                                                    \\
             & =\!
        \mathbb{E}_{(y_{k,i}^{(c)},y_{b,j}^{(c)})\sim\mathbf{T}}
        \left[
            \left|
            f\!\left(y_{k,i}^{(c)},y_{k,1}^{(c)}\right)
            -
            f\!\left(y_{b,j}^{(c)},y_{b,1}^{(c)}\right)
            \right|^2
            +
            \left|
            f\!\left(y_{k,i}^{(c)},y_{k,l_k}^{(c)}\right)
            -
            f\!\left(y_{b,j}^{(c)},y_{b,l_b}^{(c)}\right)
            \right|^2
            \right]
        \nonumber                                                                                    \\
             & =\!
        \mathbb{E}_{(y_{k,i}^{(c)},y_{b,j}^{(c)})\sim\mathbf{T}}
        \left[
            \left|
            \frac{i-1}{l_k-1}
            -
            \frac{j-1}{l_b-1}
            \right|^2
            +
            \left|
            \frac{l_k-i}{l_k-1}
            -
            \frac{l_b-j}{l_b-1}
            \right|^2
            \right]
        \nonumber                                                                                    \\
             & =\!
        \mathbb{E}_{(y_{k,i}^{(c)},y_{b,j}^{(c)})\sim\mathbf{T}}
        \left[
            \left|
            \frac{i-1}{l_k-1}
            -
            \frac{j-1}{l_b-1}
            \right|^2
            +
            \left|
            -\left(
            \frac{i-1}{l_k-1}
            -
            \frac{j-1}{l_b-1}
            \right)
            \right|^2
            \right]
        \nonumber                                                                                    \\
             & =\!
        2\,
        \mathbb{E}_{(y_{k,i}^{(c)},y_{b,j}^{(c)})\sim\mathbf{T}}
        \left[
            \left|
            \frac{i-1}{l_k-1}
            -
            \frac{j-1}{l_b-1}
            \right|^2
            \right].
    \end{align}

    Note that we apply a scaling factor of 2 here to account for both the anchor points $y^{(c)}_{k,1}$ and $y^{(c)}_{k,l_k}$ in the linearization of $\mathrm{GW}_2$.
    Substituting $\mathrm{GW}_2$ in Eq.~\eqref{eq:expand_fgw} by $\mathrm{GW}_2^{\mathrm{linear}}$ in Eq.~\eqref{eq:derive_linear_gw} yields
    \begin{equation}
        \!\!
        \mathrm{FGW}_{2,\alpha}^{\mathrm{linear}}
        =
        \min_{\mathbf{T}\in\Pi(\mu_k,\mu_b)}
        \mathbb{E}_{(y_{k,i}^{(c)},y_{b,j}^{(c)})\sim\mathbf{T}}
        \left[
            (1-\alpha)
            g^2\!\left(y_{k,i}^{(c)},y_{b,j}^{(c)}\right)
            +
            2\alpha
            \left|
            \frac{i-1}{l_k-1}
            -
            \frac{j-1}{l_b-1}
            \right|^2
            \right]
    \end{equation}
    which give a Wasserstein distance formulation that is linear w.r.t $\mathbf{T}$, thus proving the Lemma.
\end{proof}

\subsection{Proof of Theorem~\ref{thm:linearization_error}}\label{app:proof_linearization_error}
\begin{theorem*}[Approximation Error of Anchor-Guided Linearization]
    Let $\mathbf{T}_{\mathrm{FGW}}^*\in\Pi(\mu_k,\mu_b)$ be an optimal transport plan that minimizes the original FGW problem with uniform marginal distributions $\mu_k$ and $\mu_b$.
    The approximation error of the anchor-guided linearization in Lemma~\ref{lem:fgw_linear} is bounded as follows
    \begin{equation}
        0\leq
        \mathrm{FGW}_{2,\alpha}^{\text{linear}}(\mathcal{G}_k^{(c)},\mathcal{G}_b^{(c)})
        -
        \mathrm{FGW}_{2,\alpha}(\mathcal{G}_k^{(c)},\mathcal{G}_b^{(c)})
        \leq
        4\alpha\,\mathcal{C}(\mathbf{T}_{\mathrm{FGW}}^*),
    \end{equation}
    where
    \begin{equation}
        \mathcal{C}(\mathbf{T})
        =
        \mathbb{E}_{\mathbf{T}\otimes\mathbf{T}}
        \left[
            \frac{|I-I'|}{L_k-1}
            \frac{|J-J'|}{L_b-1}
            \mathbbm{1}\!\left\{(I-I')(J-J')<0\right\}
            \right],
        \qquad
        (I,J),(I',J')\overset{\mathrm{i.i.d.}}{\sim}\mathbf{T}.
    \end{equation}
\end{theorem*}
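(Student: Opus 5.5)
The plan is to compare the two objectives pointwise in $\mathbf{T}$ via an exact algebraic identity for the GW integrand, and then transfer that comparison to the two minima. Write $a_i=\frac{i-1}{L_k-1}$ and $b_j=\frac{j-1}{L_b-1}$ for the normalized positions in $[0,1]$. Then $f(y_{k,i},y_{k,i'})=|a_i-a_{i'}|$ and $f(y_{b,j},y_{b,j'})=|b_j-b_{j'}|$. For a coupling $\mathbf{T}$, let $(I,J),(I',J')$ be i.i.d.\ draws from $\mathbf{T}$ and set $D=a_I-b_J$, $D'=a_{I'}-b_{J'}$. Both objectives share the term $(1-\alpha)\mathrm{W}_2(\mathbf{T},g)$. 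They differ only in $\alpha\,\mathrm{GW}_2(\mathbf{T},f)$ versus $2\alpha\,\mathbb{E}_{\mathbf{T}}[D^2]$, the latter coming from Lemma~\ref{lem:fgw_linear}.

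\textbf{Key identity.} For reals $x,y$ we have $(|x|-|y|)^2=(x-y)^2-2(|x||y|-xy)$. Moreover $|x||y|-xy$ equals $2|x||y|$ when $xy<0$ and $0$ otherwise. Apply this with $x=a_I-a_{I'}$ and $y=b_J-b_{J'}$, for which $x-y=D-D'$. This gives
\begin{equation*}
\mathrm{GW}_2(\mathbf{T},f)=\mathbb{E}_{\mathbf{T}\otimes\mathbf{T}}\big[(D-D')^2\big]-4\,\mathcal{C}(\mathbf{T})=2\,\mathrm{Var}_{\mathbf{T}}(D)-4\,\mathcal{C}(\mathbf{T}).
\end{equation*}
Consequently, writing $\mathcal{L}(\mathbf{T})$ and $\mathcal{F}(\mathbf{T})$ for the linearized and original objectives,
\begin{equation*}
\mathcal{L}(\mathbf{T})-\mathcal{F}(\mathbf{T})=2\alpha\big(\mathbb{E}_{\mathbf{T}}[D^2]-\mathrm{Var}_{\mathbf{T}}(D)\big)+4\alpha\,\mathcal{C}(\mathbf{T})=2\alpha\,(\mathbb{E}_{\mathbf{T}}D)^2+4\alpha\,\mathcal{C}(\mathbf{T})\ \ge 0 .
\end{equation*}

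\textbf{Using uniform marginals.} Since $\mathbf{T}\in\Pi(\mu_k,\mu_b)$ with uniform $\mu_k,\mu_b$, the mean of $a_I$ is $\frac{1}{L_k}\sum_i\frac{i-1}{L_k-1}=\frac12$. Likewise $\mathbb{E}\,b_J=\frac12$. Hence $\mathbb{E}_{\mathbf{T}}D=0$, and for every feasible $\mathbf{T}$ we get $\mathcal{L}(\mathbf{T})-\mathcal{F}(\mathbf{T})=4\alpha\,\mathcal{C}(\mathbf{T})\ge0$.

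\textbf{Transferring to the minima.} For the lower bound, let $\mathbf{T}_{\mathrm{lin}}$ minimize $\mathcal{L}$. Then
\begin{equation*}
\mathrm{FGW}_{2,\alpha}\le\mathcal{F}(\mathbf{T}_{\mathrm{lin}})\le\mathcal{L}(\mathbf{T}_{\mathrm{lin}})=\mathrm{FGW}^{\mathrm{linear}}_{2,\alpha}.
\end{equation*}
For the upper bound, evaluate $\mathcal{L}$ at $\mathbf{T}^*_{\mathrm{FGW}}$:
\begin{equation*}
\mathrm{FGW}^{\mathrm{linear}}_{2,\alpha}\le\mathcal{L}(\mathbf{T}^*_{\mathrm{FGW}})=\mathcal{F}(\mathbf{T}^*_{\mathrm{FGW}})+4\alpha\,\mathcal{C}(\mathbf{T}^*_{\mathrm{FGW}})=\mathrm{FGW}_{2,\alpha}+4\alpha\,\mathcal{C}(\mathbf{T}^*_{\mathrm{FGW}}).
\end{equation*}

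\textbf{Main obstacle.} The delicate step is the sign bookkeeping in the identity $(|x|-|y|)^2=(x-y)^2-4|x||y|\mathbbm{1}\{xy<0\}$. The other point to get right is recognizing that the cross term $(\mathbb{E}_{\mathbf{T}}D)^2$ vanishes exactly. That cancellation is precisely where the uniform-marginal hypothesis is used; without it an extra $2\alpha(\mathbb{E}D)^2$ would appear in the bound.
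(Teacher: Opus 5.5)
Your proposal is correct and follows essentially the same route as the paper's proof. Both use the identity $(|x|-|y|)^2=(x-y)^2-4|x||y|\mathbbm{1}\{xy<0\}$ and the i.i.d. expansion, and in both the cross term $(\mathbb{E}_{\mathbf{T}}[D])^2$ vanishes because the marginals are uniform, giving $\mathcal{L}(\mathbf{T})-\mathcal{F}(\mathbf{T})=4\alpha\,\mathcal{C}(\mathbf{T})$ for every coupling, followed by the same comparison of the two minima. Your added remark is also right: the lower bound holds even without uniform marginals, because $2\alpha(\mathbb{E}D)^2\ge 0$. The paper does not note this.
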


\begin{proof}
    Let $F(\mathbf{T})$ and $F_{\mathrm{lin}}(\mathbf{T})$ denote the original FGW formulation and its anchor-guided linearization evaluated at a common transport plan $\mathbf{T}\in\Pi(\mu_k,\mu_b)$, i.e.,
    \begin{equation}\label{eq:dist_common_T}
        \begin{aligned}
            F(\mathbf{T})                & =(1-\alpha)\mathrm{W}_2(\mathbf{T},\mathbf{M}_{k,b})+\alpha\,\mathrm{GW}_2(\mathbf{T},\mathbf{C}_k,\mathbf{C}_b),                                                \\
            F_{\mathrm{lin}}(\mathbf{T}) & =(1-\alpha)\mathrm{W}_2(\mathbf{T},\mathbf{M}_{k,b})+2\alpha\,\mathbb \sum_{i=1,j=1}^{L_k,L_b}\left(\frac{i-1}{L_k-1}-\frac{j-1}{L_b-1}\right)^2\mathbf{T}(i,j).
        \end{aligned}
    \end{equation}
    Based on $(I,J),(I',J')$ sampled i.i.d. from $\mathbf{T}$, define the normalized token locations $P$, $Q$, $P'$, and $Q'$ as follows
    \[
        P = \frac{I-1}{L_k-1}, \quad Q = \frac{J-1}{L_b-1}, \quad P' = \frac{I'-1}{L_k-1}, \quad Q' = \frac{J'-1}{L_b-1}.
    \]

    In this way, we can rewrite $\mathrm{GW}_2(\mathbf{T},\mathbf{C}_k,\mathbf{C}_b)$ and its linearization in Eq.~\eqref{eq:dist_common_T} as follows
    \begin{align}\label{eq:gw_exp}
        \mathrm{GW}_2(\mathbf T,\mathbf C_k,\mathbf C_b)
         & =
        \sum_{i,i'=1}^{L_k}
        \sum_{j,j'=1}^{L_b}
        \left(
        \frac{|i-i'|}{L_k-1}
        -
        \frac{|j-j'|}{L_b-1}
        \right)^2
        \mathbf T(i,j)\mathbf T(i',j')
        \nonumber \\
         & =
        \mathbb E_{\mathbf T\otimes\mathbf T}
        \left[
            \left(
            |P-P'|-|Q-Q'|
            \right)^2
            \right]
    \end{align}
    \begin{equation}\label{eq:linear_exp}
        \sum_{i=1,j=1}^{L_k,L_b}\left(\frac{i-1}{L_k-1}-\frac{j-1}{L_b-1}\right)^2\mathbf{T}(i,j)=
        \mathbb E_{\mathbf T}\left[(P-Q)^2\right].
    \end{equation}

    For any $a,b\in\mathbb R$, we have
    \begin{equation}
        (|a|-|b|)^2
        =
        (a-b)^2
        -
        4|ab|\mathbbm 1\{ab<0\}.
        \label{eq:abs_identity}
    \end{equation}
    Applying Eq.~\eqref{eq:abs_identity} with
    $a=P-P'$ and $b=Q-Q'$ gives
    \begin{align}
         & \mathbb E_{\mathbf T\otimes\mathbf T}
        \left[
            \left(|P-P'|-|Q-Q'|\right)^2
            \right]
        \nonumber                                \\
         & =
        \mathbb E_{\mathbf T\otimes\mathbf T}
        \left[
            \left((P-Q)-(P'-Q')\right)^2
            \right]
        -
        4\mathcal C(\mathbf T).
        \label{eq:gw_decomp}
    \end{align}
    where
    \begin{equation}\label{eq:error_term}
        \mathcal{C}(\mathbf{T})
        =
        \mathbb{E}_{\mathbf{T}\otimes\mathbf{T}}
        \left[
            |P-P'||Q-Q'|\mathbbm{1}\{(P-P')(Q-Q')<0\}
            \right].
    \end{equation}

    Since both $(P,Q)$ and $(P',Q')$ are i.i.d., we have
    \begin{align}
         & \mathbb{E}_{\mathbf{T}\otimes\mathbf{T}}
        \left[
            \left((P-Q)-(P'-Q')\right)^2
            \right]
        \nonumber                                   \\
         & \qquad=
        \mathbb{E}_{\mathbf{T}}\left[(P-Q)^2\right]
        +
        \mathbb{E}_{\mathbf{T}}\left[(P'-Q')^2\right]
        -
        2\left(\mathbb{E}_{\mathbf{T}}[P-Q]\right)^2
        \nonumber                                   \\
         & \qquad=
        2\mathbb{E}_{\mathbf{T}}\left[(P-Q)^2\right]
        -
        2\left(
        \mathbb{E}_{\mathbf{T}}[P]
        -
        \mathbb{E}_{\mathbf{T}}[Q]
        \right)^2.
        \label{eq:iid_identity}
    \end{align}
    Since $\mathbf{T}\in\Pi(\mu_k,\mu_b)$ where $\mu_k$ and $\mu_b$ are two uniform distributions, we have
    \begin{equation}
        \mathbb E[P]
        =
        \frac{1}{L_k}\sum_{i=1}^{L_k}\frac{i-1}{L_k-1}
        =
        \frac12,
        \qquad
        \mathbb E[Q]
        =
        \frac{1}{L_b}\sum_{j=1}^{L_b}\frac{j-1}{L_b-1}
        =
        \frac12.
    \end{equation}

    Therefore, we have
    \begin{equation}\label{eq:gw_exp_simplified}
        \mathbb{E}_{\mathbf{T}\otimes\mathbf{T}}
        \left[
            \left((P-Q)-(P'-Q')\right)^2
            \right]
        =
        2\mathbb{E}_{\mathbf{T}}\left[(P-Q)^2\right].
    \end{equation}

    Substituting Eq.~\eqref{eq:gw_exp_simplified} into Eq.~\eqref{eq:gw_decomp} and~\eqref{eq:gw_exp} yields
    \begin{align}\label{eq:gw_exp_sim_final}
        \mathrm{GW}_2(\mathbf T,\mathbf C_k,\mathbf C_b)
         & =\mathbb{E}_{\mathbf{T}\otimes\mathbf{T}}
        \left[
            \left(
            |P-P'|-|Q-Q'|
            \right)^2
            \right]
        \nonumber                                    \\
         & =
        2\mathbb{E}_{\mathbf{T}}\left[(P-Q)^2\right]
        -
        4\mathcal C(\mathbf T)
    \end{align}

    Comparing Eq.~\eqref{eq:gw_exp_sim_final} with Eq.~\eqref{eq:dist_common_T} and~\eqref{eq:linear_exp}
    yields that, for every $\mathbf{T}\in\Pi(\mu_k,\mu_b)$, we have
    \begin{equation}
        F_{\mathrm{lin}}(\mathbf T)-F(\mathbf T)
        =
        4\alpha\,\mathcal C(\mathbf T)
        \geq 0.
        \label{eq:pointwise_bound}
    \end{equation}

    Let
    \[
        \mathbf T_{\mathrm{FGW}}^*
        =\mathop{\arg\min}\limits_{\mathbf{T}\in\Pi(\mu_k,\mu_b)}F(\mathbf T),
        \qquad
        \mathbf T_{\mathrm{lin}}^*
        =\mathop{\arg\min}\limits_{\mathbf{T}\in\Pi(\mu_k,\mu_b)}F_{\mathrm{lin}}(\mathbf T)
    \]
    be optimal transport plans that minimize $F(\mathbf T)$ and $F_{\mathrm{lin}}(\mathbf T)$,
    we have
    \begin{equation}\label{eq:fgw_opt}
        \mathrm{FGW}_{2,\alpha}(\G_k^{(c)},\G_b^{(c)})=F(\mathbf T_{\mathrm{FGW}}^*),
        \qquad
        F(\mathbf T_{\mathrm{FGW}}^*)\leq F(\mathbf T_{\mathrm{lin}}^*)
    \end{equation}
    \begin{equation}\label{eq:fgw_linear_opt}
        \mathrm{FGW}_{2,\alpha}^{\mathrm{linear}}(\G_k^{(c)},\G_b^{(c)})=F_{\mathrm{lin}}(\mathbf T_{\mathrm{lin}}^*),
        \qquad
        F_{\mathrm{lin}}(\mathbf T_{\mathrm{lin}}^*)\leq F_{\mathrm{lin}}(\mathbf T_{\mathrm{FGW}}^*)
    \end{equation}

    For the lower bound, from Eq.~\eqref{eq:pointwise_bound} to~\eqref{eq:fgw_linear_opt}, we have
    \begin{equation}
        \mathrm{FGW}_{2,\alpha}^{\mathrm{linear}}(\G_k^{(c)},\G_b^{(c)})
        =
        F_{\mathrm{lin}}(\mathbf T_{\mathrm{lin}}^*)
        \geq
        F(\mathbf T_{\mathrm{lin}}^*)
        \geq
        F(\mathbf T_{\mathrm{FGW}}^*)
        =
        \mathrm{FGW}_{2,\alpha}(\G_k^{(c)},\G_b^{(c)})
    \end{equation}
    Therefore, we have
    \begin{equation}
        \mathrm{FGW}_{2,\alpha}^{\mathrm{linear}}(\G_k^{(c)},\G_b^{(c)})
        -
        \mathrm{FGW}_{2,\alpha}(\G_k^{(c)},\G_b^{(c)})
        \geq0.
        \label{eq:lower_error_bound}
    \end{equation}

    For the upper bound, from Eq.~\eqref{eq:pointwise_bound} to~\eqref{eq:fgw_linear_opt}, we have
    \begin{align}
        \mathrm{FGW}_{2,\alpha}^{\mathrm{linear}}(\G_k^{(c)},\G_b^{(c)})
         & =
        F_{\mathrm{lin}}(\mathbf T_{\mathrm{lin}}^*)
        \nonumber \\
         & \leq
        F_{\mathrm{lin}}(\mathbf T_{\mathrm{FGW}}^*)
        \nonumber \\
         & =
        F(\mathbf T_{\mathrm{FGW}}^*)
        +
        4\alpha\mathcal C(\mathbf T_{\mathrm{FGW}}^*),
        \nonumber \\
         & =
        \mathrm{FGW}_{2,\alpha}(\G_k^{(c)},\G_b^{(c)})
        +
        4\alpha\mathcal C(\mathbf T_{\mathrm{FGW}}^*).
    \end{align}
    Therefore, we have
    \begin{equation}
        \mathrm{FGW}_{2,\alpha}^{\mathrm{linear}}
        -
        \mathrm{FGW}_{2,\alpha}
        \leq
        4\alpha\mathcal C(\mathbf T_{\mathrm{FGW}}^*).
        \label{eq:upper_error_bound}
    \end{equation}
    Combining Eqs.~\eqref{eq:lower_error_bound} and~\eqref{eq:upper_error_bound} yields
    \begin{equation}
        0\leq
        =
        \mathrm{FGW}_{2,\alpha}^{\mathrm{linear}}(\mathcal G_k^{(c)},\mathcal G_b^{(c)})
        -
        \mathrm{FGW}_{2,\alpha}(\mathcal G_k^{(c)},\mathcal G_b^{(c)})
        \leq
        4\alpha\mathcal C(\mathbf T_{\mathrm{FGW}}^*).
    \end{equation}

    Finally, since
    \[
        P-P'=\frac{I-I'}{L_k-1},
        \qquad
        Q-Q'=\frac{J-J'}{L_b-1},
    \]
    replacing $P-P'$ and $Q-Q'$ in Eq.~\eqref{eq:error_term} yields
    \[
        \mathcal C(\mathbf T)
        =
        \mathbb E_{\mathbf T\otimes\mathbf T}
        \left[
            \frac{|I-I'|}{L_k-1}
            \frac{|J-J'|}{L_b-1}
            \mathbbm 1\!\left\{(I-I')(J-J')<0\right\}
            \right].
    \]
    In this way, we have proven Theorem~\ref{thm:linearization_error}.
\end{proof}
\subsection{Proof of Theorem~\ref{thm:complexity}}\label{app:proof_complexity}
\begin{theorem*}[Complexity of \algname]
    The time complexity of \algname\ is $\mathcal{O}\!\left(B K L L_b (d+S)\right)$,
    and the space complexity of \algname\ is $\mathcal{O}\!\left(K L L_b + K L d + L_bd\right)$,
    where $K$ is the number of sampled rollouts of length $L$, $L_b$ is the barycenter size, $d$ is the hidden state dimension.
    $S$ and $B$ are the number of Sinkhorn and barycenter optimization iterations for solving Eq.~\eqref{eq:fgw_linear}, respectively.
    ~\footnote{
        Without loss of generality,
        we assume $\mathcal{O}(K^{(+)})\approx\mathcal{O}(K^{(-)})\approx\mathcal{O}(K)$
        and $\mathcal{O}(L^{(c)}_k)\approx\mathcal{O}(L)$ for all $k\in[1,K]$ and $c\in\{+,-\}$.
    }.
\end{theorem*}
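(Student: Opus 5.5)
The plan is to account for the cost of one call of \algname\ on one prompt by breaking the pipeline into three stages: (i) forming the cost matrices, (ii) the inner Sinkhorn solve of Eq.~\eqref{eq:fgw_linear} together with the outer barycenter update of Eq.~\eqref{eq:fgw_state} (with $\mathrm{FGW}_{2,\alpha}$ replaced by $\mathrm{FGW}_{2,\alpha}^{\mathrm{linear}}$ via Lemma~\ref{lem:fgw_linear}), and (iii) computing the credits $\phi^{(c)}_{k,i}$ and $\omega^{(c)}_{k,\mathcal{S}}$. Throughout I will use the footnoted convention that both groups have $\mathcal{O}(K)$ rollouts of length $\mathcal{O}(L)$. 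The constant factor $2$ from the two groups $c\in\{+,-\}$ is absorbed. The hidden states $\mathbf{H}$ are assumed to be already produced by the forward pass of the policy, so they are not charged to \algname.

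\textbf{Per outer iteration.} Fix a barycenter estimate $\mathbf{X}_b\in\mathbb{R}^{L_b\times d}$. For each rollout $k$, the linearized cost matrix has two parts. The first is $(1-\alpha)\|\mathbf{X}_{k,i}-\mathbf{X}_{b,j}\|_2^2$. Via the expansion $\|x\|^2+\|y\|^2-2x^\top y$ it costs $\mathcal{O}(L L_b d)$, dominated by the product $\mathbf{X}_k\mathbf{X}_b^\top$. The second is $2\alpha\bigl(\tfrac{i-1}{L_k-1}-\tfrac{j-1}{L_b-1}\bigr)^2$, which costs $\mathcal{O}(L L_b)$. Entropic Sinkhorn then runs $S$ iterations. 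Each iteration multiplies the $L\times L_b$ Gibbs kernel by a vector in each direction, giving $\mathcal{O}(S L L_b)$ per rollout. The barycenter update for the squared-$L_2$ feature cost has the closed form $\mathbf{X}_b = \mathrm{diag}(\mathbf{w}_b)^{-1}\frac{1}{K}\sum_k \mathbf{T}_k^\top \mathbf{X}_k$. This is because the positional term does not depend on $\mathbf{X}_b$, so the first-order condition of a convex quadratic gives this formula. Computing each $\mathbf{T}_k^\top\mathbf{X}_k$ costs $\mathcal{O}(L L_b d)$. Summing over the $K$ rollouts, one outer iteration costs $\mathcal{O}(K L L_b(d+S))$, and $B$ outer iterations give $\mathcal{O}(BKLL_b(d+S))$.

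\textbf{Credit computation and space.} The final $\phi^{(c)}_{k,i}$ is a row-sum of $\mathbf{T}_k\odot\mathbf{M}_k$, costing $\mathcal{O}(KLL_b)$ given the stored cost matrices, or $\mathcal{O}(KLL_bd)$ if they are recomputed. The span aggregation and the normalization defining $\omega^{(c)}_{k,\mathcal{S}}$ are linear in the number of tokens, $\mathcal{O}(KL)$. Both are dominated by a single outer iteration, which yields the stated time bound. For space, I will store $K$ transport plans and cost matrices, $\mathcal{O}(KLL_b)$; the hidden states, $\mathcal{O}(KLd)$; and the barycenter, $\mathcal{O}(L_bd)$. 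Sinkhorn scaling vectors and positional vectors take only $\mathcal{O}(K(L+L_b))$, which is absorbed. Summing gives $\mathcal{O}(KLL_b+KLd+L_bd)$.

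\textbf{Main obstacle.} The step needing the most care is making sure no hidden term breaks linearity in $L$. I will check two things. First, the positional cost must be built from broadcasted normalized indices rather than from the full $L\times L$ intra-rollout distance matrices $\mathbf{C}_k$; this is exactly what the anchor linearization of Lemma~\ref{lem:fgw_linear} permits, since the GW term would otherwise require $\mathcal{O}(L^2 L_b)$ or worse. Second, the barycenter step must be the closed-form weighted average rather than an inner gradient loop; if an iterative solver is used instead, its iterations should be folded into $B$.
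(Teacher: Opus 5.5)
Your proposal is correct and follows the paper's proof essentially step for step. Per outer iteration you charge $\mathcal{O}(LL_bd+LL_b)$ for building each cost matrix, $\mathcal{O}(SLL_b)$ for each Sinkhorn solve, and $\mathcal{O}(KLL_bd)$ for the barycenter update, then sum over $K$ and multiply by $B$; for space you add the storage of the plans, the hidden states and the barycenter. Your explicit closed-form barycenter update and your cost for the final credits $\phi^{(c)}_{k,i}$ and $\omega^{(c)}_{k,\mathcal{S}}$ are details the paper leaves implicit, and both are dominated by the stated bounds.
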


\begin{proof}
    \textbf{For time complexity}, the computation of \algname\ consists mainly of two alternating optimization steps~\citep{zeng2024hierarchical}:
    for each iteration, (1) solving Eq.~\eqref{eq:fgw_linear} for all $K$ Wasserstein OT problems,
    and (2) updating $\mathbf{X}_b^{(c)}$ based on the solved Wasserstein distance at the current iteration.
    For solving the Wasserstein distance for one rollout in Eq.~\eqref{eq:fgw_linear},
    constructing the cost matrix requires $\mathcal{O}(LL_b d)$ and $\mathcal{O}(LL_b)$ time for computing the pairwise distances of the $d$-dimensional hidden states and normalized locations, respectively.
    By entropy regularization, each Wasserstein OT problem can be solved in $\mathcal{O}(SLL_b)$ time via the Sinkhorn algorithm~\citep{gabriel2019computational}.
    Therefore, solving Eq.~\eqref{eq:fgw_linear} for all $K$ rollouts
    costs
    \[
        K\cdot\mathcal{O}\!\left(LL_b d+LL_b+SLL_b\right)=\mathcal{O}\!\left(KLL_b(d+S)\right)
    \]
    per iteration. Updating $\mathbf{X}_b^{(c)}$ requires $\mathcal{O}(KLL_b d)$ time.
    Therefore, over $B$ barycenter optimization iterations, the total time complexity is
    \[
        B\cdot\mathcal{O}\!\left(KLL_b(d+S)+KLL_b d\right)=\mathcal{O}\!\left(BKLL_b(d+S)\right).
    \]

    \textbf{For space complexity}, storing the transport plan matrices $\mathbf{T}_k^{(c)}\in\mathbb{R}^{L\times L_b}$
    for $K$ Wasserstein OT problems requires $\mathcal{O}(KLL_b)$ space.
    Storing the hidden states of $K$ rollouts and the barycenter requires $\mathcal{O}(KLd)$ and $\mathcal{O}(L_bd)$ space, respectively.
    Therefore, the total space complexity is
    \[
        \mathcal{O}\!\left(KLL_b+KLd+L_bd\right).
    \]
\end{proof}

\section{Detailed Experimental Setup}\label{app:exp_add_setup}
\subsection{Dataset Setup}
\paragraph{Training Dataset.} We train all models on the MATH dataset~\citep{hendrycks2021measuring} which contains 12,500 problems from mathematics competitions.
We use the publicly available version at \url{https://huggingface.co/datasets/DigitalLearningGmbH/MATH-lighteval}.
Following~\citep{lightman2024let}, we exclude problems inside MATH500 from the MATH dataset, and use the remaining 12,000 problems for training.

\paragraph{Evaluation Datasets.} We evaluate the trained models on five standard mathematical reasoning benchmarks,
including MATH500, AMC23, AIME24, AIME25~\citep{lightman2024let}, and Minerva Math~\citep{lewkowycz2022solving}, all of which are publicly available at
\url{https://huggingface.co/math-ai/datasets}.

\subsection{Training Setup}
Across all group-based RLVR methods, we remove the KL penality by setting $\beta=0$ in the loss function~\citep{liu2025understanding, yu2026dapo},
and set the number of sampled rollouts to 16 for each prompt during training for advantage estimation.
For rollout generation, we set a temperature to 1.0 with $top\_p=1.0$ and $top\_k=-1$. For evaluation, we set a temperature to 0.6 with $top\_p=0.7$ and the same $top\_k$ values.
All evaluation metrics reported are averaged over 5 different runs.
We set a training batch size of 128 and a mini-batch size of 16 with a maximum response length of 8196 tokens.
For the optimizer, we use AdamW with a learning rate of $1e-6$, optimizer parameters of $\beta=(0.9, 0.999)$, and a weight decay of $0.01$.
For GRPO~\citep{shao2024deepseekmath} and Dr.GRPO~\citep{liu2025understanding}, we set the clipping parameter $\epsilon=0.2$;
for DAPO~\citep{yu2026dapo}, we set the clipping parameter $\epsilon_{\text{low}}=0.2$ and $\epsilon_{\text{high}}=0.28$.
All experiments are conducted on 8 NVIDIA A100 GPUs.

\subsection{Hyperparameter Setup}
For hyperparameters of \algname\, we set $\alpha=0.25$ in Eq.~\eqref{eq:fgw_state}, and the barycenter size $l_b=32$.
For the entropy regularization, we set the its coefficient to $0.05$. Such hyperparameter setup is consistent across all experiments.
\subsection{Setup for Interpretability Study in Figure~\ref{fig:obs}}\label{app:exp_case_setup}
In the interpretability study in Figure~\ref{fig:vis-p-map} and~\ref{fig:loro-ot-compare}, we set the number of sampled rollouts within each outcome-grounded group to $K=16$ for computing the barycenters.
The barycenter sizes of all OT variants are set to $l_b=32$, and the entropy regularization coefficient is set to $0.05$.
For the FGW formulation, the hyperparameter $\alpha$ is set to $0.25$. All experiments in Figure~\ref{fig:obs} are conducted on the Qwen3-8B-Base~\citep{yang2025qwen3}.
The prompt for GPT-5-Sol to derive the reasoning stages of each rollout and classify tokens to these stages is included below.

\begin{tcolorbox}[
        enhanced,
        breakable,
        colback=white,
        colframe=black!75,
        colbacktitle=black!75,
        coltitle=white,
        title={Structured Reasoning-Stage Annotation Prompt},
        fonttitle=\bfseries\large,
        arc=2mm,
        boxrule=0.8pt,
        left=3mm,
        right=3mm,
        top=2mm,
        bottom=2mm
    ]
    \small
    You are given one problem and multiple reasoning rollouts.

    Your task is to infer a \textbf{shared superset of abstract reasoning stages} that collectively explains the reasoning performed across all provided rollouts, and then assign every token in every rollout to one of these stages.

    The stages should describe \textbf{reasoning functions or roles}, rather than surface wording. Examples include understanding or reformulating the problem, establishing an intermediate result, applying a theorem or relation, case analysis, algebraic simplification, verification, and deriving the final answer.

    \medskip
    \textbf{\large Shared Stage Superset}

    Infer a stage vocabulary
    \[
        \mathcal{C}=\{C_1,C_2,\ldots,C_K\}
    \]
    that is sufficiently expressive to accommodate the reasoning content appearing anywhere in the rollout group.

    The stage vocabulary is a \textbf{superset}: a stage does not need to appear in every rollout.

    For example, if some rollouts solve the problem directly while others introduce an auxiliary lemma, the shared vocabulary should contain an \texttt{Auxiliary Lemma} stage even though the direct rollouts skip it.

    Different rollouts may therefore:
    \begin{itemize}[nosep,leftmargin=*,topsep=2pt]
        \item skip one or more stages;
        \item repeat a stage;
        \item use only a subset of the shared stages;
        \item realize the same stage using substantially different wording;
        \item execute shared stages in different orders when their reasoning genuinely differs.
    \end{itemize}

    Do \textbf{not} require every rollout to instantiate every stage.

    At the same time, avoid unnecessarily creating rollout-specific stages. If two spans perform the same underlying reasoning function despite different wording or mathematical expressions, assign them to the same abstract stage.

    \medskip
    \textbf{\large Complete Token Coverage}

    Every token in every rollout must be assigned to one of the inferred reasoning stages.

    \textbf{Do not use \texttt{OTHER}, \texttt{UNKNOWN}, or any unassigned category.}

    If some tokens do not fit the current stage vocabulary, refine or expand the shared stage vocabulary so that they can be assigned to an appropriate reasoning stage.

    This includes transitional language, equations, mathematical symbols, intermediate calculations, verification statements, and final-answer text. Assign such tokens to the reasoning stage whose function they support.

    Prefer a compact and meaningful stage vocabulary over creating stages for individual words or superficial formatting differences.

    \medskip
    \textbf{\large Annotation Procedure}
    \begin{enumerate}[leftmargin=*,topsep=2pt,itemsep=2pt]
        \item Read the problem and \textbf{all rollouts jointly} before defining the stages.
        \item Identify the union of distinct reasoning functions required to explain the rollouts.
        \item Consolidate semantically equivalent reasoning functions into shared stages even when their surface realizations differ.
        \item Create additional stages when necessary to accommodate reasoning strategies that occur only in a subset of rollouts.
        \item For each rollout, segment the complete token sequence into contiguous spans.
        \item Assign every span to exactly one stage from the shared stage vocabulary.
        \item Preserve genuine differences in reasoning order. Do not relabel spans merely to make all rollouts follow the same stage sequence.
    \end{enumerate}

    \medskip
    \textbf{\large Output Requirements}

    For each stage, provide:
    \begin{itemize}[nosep,leftmargin=*,topsep=2pt]
        \item \texttt{stage\_id}
        \item \texttt{stage\_name}
        \item \texttt{definition}
    \end{itemize}

    For each rollout, provide contiguous token spans with:
    \begin{itemize}[nosep,leftmargin=*,topsep=2pt]
        \item \texttt{start\_token}
        \item \texttt{end\_token}
        \item \texttt{stage\_id}
    \end{itemize}

    Use the supplied token indices exactly. Do not invent, modify, or renumber tokens.

    A stage may occur multiple times in a rollout, in which case multiple spans may have the same \texttt{stage\_id}.

    \medskip
    \textbf{\large Output Format}

    Return \textbf{valid JSON only} using the following schema:

    \begin{Verbatim}[fontsize=\scriptsize,breaklines=true,breakanywhere=true]
        {
            "stages": [
                {
                    "stage_id": "C1",
                    "stage_name": "...",
                    "definition": "..."
                },
                {
                    "stage_id": "C2",
                    "stage_name": "...",
                    "definition": "..."
                }
            ],
            "rollouts": [
                {
                    "rollout_id": "R1",
                    "spans": [
                        {
                            "start_token": 0,
                            "end_token": 12,
                            "stage_id": "C1"
                        },
                        {
                            "start_token": 13,
                            "end_token": 27,
                            "stage_id": "C3"
                        }
                    ]
                }
            ]
        }
    \end{Verbatim}

    \medskip
    \textbf{\large Final Validation}

    Before returning the JSON, verify all of the following:
    \begin{itemize}[nosep,leftmargin=*,topsep=2pt,itemsep=2pt]
        \item Every rollout token is assigned exactly once.
        \item The first span begins at the first token.
        \item The final span ends at the final token.
        \item Consecutive spans are contiguous with no gaps.
        \item Spans never overlap.
        \item Every assigned \texttt{stage\_id} exists in \texttt{stages}.
        \item No \texttt{OTHER}, \texttt{UNKNOWN}, or unlabeled tokens are present.
        \item Stages form a \textbf{superset across the rollout group}; individual rollouts are allowed to skip stages.
        \item Stages capture reasoning functions rather than superficial wording, token type, or formatting.
        \item The stage vocabulary is no larger than necessary to explain meaningful reasoning differences across the rollouts.
    \end{itemize}

    \textbf{Return JSON only.}
\end{tcolorbox}

\section{Additional Experiment Results}\label{app:add_exp_res}
\subsection{Additional Benchmark Results}\label{app:exp_add_bench}

\begin{table*}[t]
    \centering
    \caption{Additional benchmark results of credit assignment methods on Dr.GRPO and DAPO with Qwen3-8B-Base.
        The 1st/2nd best results are highlighted in \textbf{bold}/\underline{underline}, respectively.
        Our methods are highlighted in \colorbox{red!8}{\!red box\!}.
    }
    \label{tab:bench_drgrpo_dapo}
    \vspace{-8pt}
    \begingroup
    \newcommand{\appendixbenchstd}[1]{\,{\scriptsize\textcolor{gray}{$\pm$#1}}}
    \setlength{\tabcolsep}{8pt}
    \renewcommand{\arraystretch}{1.15}
    \resizebox{\linewidth}{!}{%
        \begin{tabular}{l|ccccc|c}
            \noalign{\hrule height 1pt}
            \textbf{Method}    & \textbf{MATH500}                       & \textbf{Minerva Math}                  & \textbf{AMC23}                         & \textbf{AIME24}                        & \textbf{AIME25}                        & \textbf{Average} \\
            \hline
            \multicolumn{7}{c}{\rule{0pt}{2.5ex}\textbf{\textit{Qwen3-8B-Base}}}                                                                                                                                                                               \\
            \hline
            Dr.GRPO            & 80.1\appendixbenchstd{3.1}             & 33.8\appendixbenchstd{2.5}             & 87.0\appendixbenchstd{1.7}             & 53.3\appendixbenchstd{0.9}             & 42.0\appendixbenchstd{0.6}             & 59.2             \\
            ~~+~GRPO-$\lambda$ & 81.5\appendixbenchstd{1.7}             & 34.1\appendixbenchstd{0.3}             & 87.0\appendixbenchstd{0.9}             & 50.7\appendixbenchstd{0.6}             & 39.3\appendixbenchstd{0.9}             & 58.5             \\
            ~~+~Entropy Adv.   & 82.8\appendixbenchstd{2.8}             & 35.4\appendixbenchstd{1.9}             & \textbf{88.5}\appendixbenchstd{0.7}    & 52.7\appendixbenchstd{0.6}             & \underline{44.7}\appendixbenchstd{1.3} & 60.8             \\
            ~~+~SHEAR          & 82.6\appendixbenchstd{1.5}             & \underline{35.9}\appendixbenchstd{2.1} & 86.0\appendixbenchstd{1.3}             & 54.7\appendixbenchstd{1.3}             & \textbf{45.3}\appendixbenchstd{1.1}    & 60.9             \\
            \rowcolor{red!8}
            ~~+~\algname-seq   & \underline{83.2}\appendixbenchstd{1.4} & 35.2\appendixbenchstd{1.0}             & \underline{87.5}\appendixbenchstd{1.1} & \underline{55.3}\appendixbenchstd{1.1} & \underline{44.7}\appendixbenchstd{0.6} & 61.2             \\
            \rowcolor{red!8}
            ~~+~\algname-tok   & \textbf{84.3}\appendixbenchstd{1.8}    & \textbf{36.1}\appendixbenchstd{1.2}    & \textbf{88.5}\appendixbenchstd{0.4}    & \textbf{56.7}\appendixbenchstd{0.6}    & \underline{44.7}\appendixbenchstd{0.9} & \textbf{62.1}    \\
            \hline
            \multicolumn{7}{c}{\rule{0pt}{2.5ex}\textbf{\textit{Qwen3-8B-Base}}}                                                                                                                                                                               \\
            \hline
            DAPO               & 82.1\appendixbenchstd{2.8}             & 35.8\appendixbenchstd{2.7}             & 89.0\appendixbenchstd{0.9}             & 52.0\appendixbenchstd{1.1}             & 42.0\appendixbenchstd{0.9}             & 60.2             \\
            ~~+~GRPO-$\lambda$ & 81.8\appendixbenchstd{1.9}             & 35.2\appendixbenchstd{1.7}             & 88.0\appendixbenchstd{0.9}             & 51.3\appendixbenchstd{0.6}             & 40.7\appendixbenchstd{0.6}             & 59.4             \\
            ~~+~Entropy Adv.   & 83.9\appendixbenchstd{2.0}             & 36.1\appendixbenchstd{1.9}             & \underline{90.5}\appendixbenchstd{1.1} & 52.7\appendixbenchstd{0.9}             & 42.7\appendixbenchstd{1.1}             & 61.2             \\
            ~~+~SHEAR          & 82.7\appendixbenchstd{1.6}             & \textbf{37.2}\appendixbenchstd{1.3}    & 90.0\appendixbenchstd{1.4}             & \underline{55.3}\appendixbenchstd{0.6} & 44.0\appendixbenchstd{0.6}             & 61.9             \\
            \rowcolor{red!8}
            ~~+~\algname-seq   & \textbf{85.2}\appendixbenchstd{1.7}    & 36.5\appendixbenchstd{2.1}             & 89.5\appendixbenchstd{0.4}             & \textbf{56.7}\appendixbenchstd{0.6}    & \underline{44.7}\appendixbenchstd{0.6} & 62.5             \\
            \rowcolor{red!8}
            ~~+~\algname-tok   & \underline{85.1}\appendixbenchstd{1.0} & \underline{37.1}\appendixbenchstd{0.9} & \textbf{91.5}\appendixbenchstd{1.3}    & \textbf{56.7}\appendixbenchstd{0.9}    & \textbf{45.3}\appendixbenchstd{0.6}    & \textbf{63.1}    \\
            \noalign{\hrule height 1pt}
        \end{tabular}%
    }
    \endgroup
\end{table*}

To further verify the generalizability of \algname\ on different group-based RLVR methods,
we conduct additional benchmark experiments on Dr.GRPO~\citep{liu2025understanding} and DAPO~\citep{yu2026dapo} using Qwen3-8B-Base.
The results are shown in Table~\ref{tab:bench_drgrpo_dapo}, which demonstrate that \textbf{1)} \algname\ consistently improves the performance of Dr.GRPO and DAPO across all datasets,
and \textbf{2)} \algname\ predominantly outperforms existing credit assignment methods on most datasets and LLMs.
Both observations demonstrate the effectiveness and generalizability of \algname\ in improving group-based RLVR methods by identifying and emphasizing novel reasoning elements during RL.

\subsection{Additional Studies}\label{app:exp_add_study}

\begin{center}
    \begin{minipage}[t]{0.40\linewidth}
        \vspace{0pt}
        \centering
        \includegraphics[width=\linewidth]{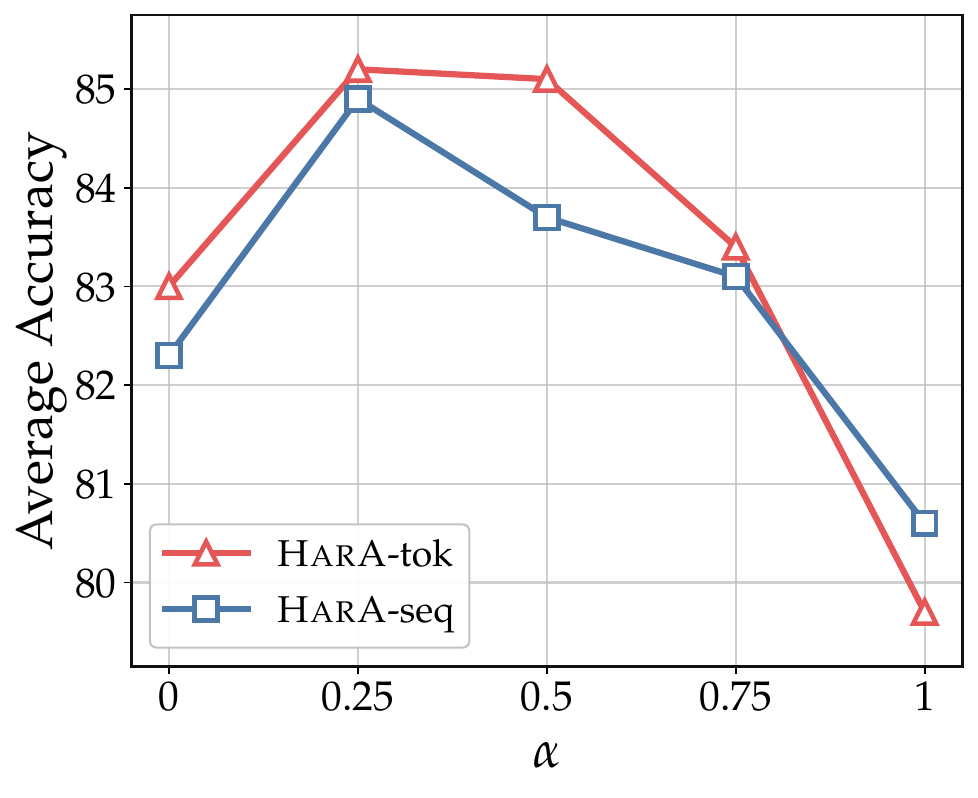}
        \vspace{-20pt}
        \captionof{figure}{Parameter study on $\alpha$ by Qwen3-8B-Base on MATH500.}
        \label{fig:exp_alpha}
    \end{minipage}
    \hfill
    \begin{minipage}[t]{0.58\linewidth}
        \vspace{0pt}
        \centering
        \captionof{table}{Ablation study of anchor-guided linearization by Qwen3-8B-Base on MATH500.}
        \label{tab:ablation_agl}
        \vspace{-6pt}
        \renewcommand{\arraystretch}{1.3}
        \resizebox{\linewidth}{!}{%
            \begin{tabular}{l|cc}
                \noalign{\hrule height 1pt}
                \textbf{Method}       & \multicolumn{2}{c}{\textbf{MATH500}}                        \\
                \cline{2-3}
                                      & \textbf{Avg. Accuracy}               & \textbf{Time / Step} \\
                \hline
                \algname(w/o AGL)-seq & 84.4                                 & 154                  \\
                \algname(w AGL)-seq   & 84.7                                 & 41.0                 \\
                \rowcolor{blue!8}
                $\Delta$              & $-0.3$                               & $3.75\times$         \\
                \noalign{\hrule height 0.6pt}
                \algname(w/o AGL)-tok & 85.3                                 & 175                  \\
                \algname(w AGL)-tok   & 85.2                                 & 43.6                 \\
                \rowcolor{blue!8}
                $\Delta$              & $+0.1$                               & $4.01\times$         \\
                \noalign{\hrule height 1pt}
            \end{tabular}%
        }
    \end{minipage}
\end{center}

\paragraph{Parameter Study on $\alpha$.}
We conduct a parameter study on the parameter $\alpha$ in Eq.~\eqref{eq:fgw_dist} which controls the trade-off between the Wasserstein distance and Gromov-Wasserstein distance in the FGW formulation.
The results on MATH500 with Qwen3-8B-Base by GRPO are shown in Figure~\ref{fig:exp_alpha}, where we can see that \textbf{the performance of \algname\ remain relatively stable when $\alpha$ is in the range of $\mathbf{[0.25,0.75]}$},
with less than 2\% performance variation. If $\alpha=0$ or $1$, the FGW formulation reduces to the Wasserstein distance or Gromov-Wasserstein distance, respectively,
which fails to capture either the semantic or structural information of a rollout, leading to more severe performance degradation. We set $\alpha=0.25$ as the default value in \algname.

\paragraph{Ablation on Anchor-Guided Linearization.}
To further understand how the anchor-guided linearization in Lemma~\ref{lem:fgw_linear} affects the empricial performance and efficiency of \algname,
we conduct an ablation study on MATH500 with Qwen3-8B-Base by GRPO, comparing the performance of \algname\ with w/ and w/o the anchor-guided linearization (AGL),
as shown in Table~\ref{tab:ablation_agl}. We can observe that \textbf{1) the anchor-guided linearization hardly affect the performance of \algname}, with less than 1\% performance variation.
Such observation is consistent with our theoretical analysis in Lemma~\ref{lem:fgw_linear}, where we show that the approximation error of the linearization is tightly bounded.
Meanwhile, \textbf{2) the anchor-guided linearization significantly improves the efficiency of \algname} thanks to strict convexity of the linearized problem~\cite{gabriel2019computational},
with up to 4$\times$ speed-up in training time per gradient update step. This futher justify using the anchor-guided linearization in \algname\ to speed-up the OT computation in \algname.

\end{document}